\let\SF@@footnote\footnote
\def\footnote{\ifx\protect\@typeset@protect
    \expandafter\SF@@footnote
  \else
    \expandafter\SF@gobble@opt
  \fi
}
\def\csname SF@gobble@opt \endcsname{\@ifnextchar[
  \SF@gobble@twobracket
  \@gobble
}
\edef\SF@gobble@opt{\noexpand\protect
  \expandafter\noexpand\csname SF@gobble@opt \endcsname}
\def\SF@gobble@twobracket[#1]#2{}
\providecommand{\tabularnewline}{\\}
\newcommand{\lyxdot}{.}
\theoremstyle{plain}
\newtheorem{thm}{\protect\theoremname}
\theoremstyle{plain}
\newtheorem{prop}[thm]{\protect\propositionname}
\ificcvfinal\pagestyle{empty}\fi
\providecommand{\propositionname}{Proposition}
\providecommand{\theoremname}{Theorem}
\begin{document}
\title{Clustering by Maximizing Mutual Information Across Views}
\author{Kien Do, Truyen Tran, Svetha Venkatesh\\
Applied Artificial Intelligence Institute (A2I2), Deakin University,
Geelong, Australia\\
\emph{\{k.do, truyen.tran, svetha.venkatesh\}@deakin.edu.au}}

\maketitle
\ificcvfinal\thispagestyle{empty}\fi

\global\long\def\Expect{\mathbb{E}}%
\global\long\def\Real{\mathbb{R}}%
\global\long\def\Data{\mathcal{D}}%
\global\long\def\Loss{\mathcal{L}}%
\global\long\def\Normal{\mathcal{N}}%
\global\long\def\ELBO{\text{ELBO}}%
\global\long\def\argmin#1{\underset{#1}{\text{argmin }}}%
\global\long\def\argmax#1{\underset{#1}{\text{argmax }}}%
\global\long\def\Model{\text{CRLC}}%

\begin{abstract}
We propose a novel framework for image clustering that incorporates
joint representation learning and clustering. Our method consists
of two heads that share the same backbone network - a ``representation
learning'' head and a ``clustering'' head. The ``representation
learning'' head captures fine-grained patterns of objects at the
instance level which serve as clues for the ``clustering'' head
to extract coarse-grain information that separates objects into clusters.
The whole model is trained in an end-to-end manner by minimizing the
weighted sum of two sample-oriented contrastive losses applied to
the outputs of the two heads. To ensure that the contrastive loss
corresponding to the ``clustering'' head is optimal, we introduce
a novel critic function called ``log-of-dot-product''. Extensive
experimental results demonstrate that our method significantly outperforms
state-of-the-art single-stage clustering methods across a variety
of image datasets, improving over the best baseline by about 5-7\%
in accuracy on CIFAR10/20, STL10, and ImageNet-Dogs. Further, the
``two-stage'' variant of our method also achieves better results
than baselines on three challenging ImageNet subsets.

\end{abstract}

\section{Introduction}

The explosion of unlabeled data, especially visual content in recent
years has led to the growing demand for effective organization of
these data into semantically distinct groups in an unsupervised manner.
Such data clustering facilitates downstream machine learning and reasoning
tasks. Since labels are unavailable, clustering algorithms are mainly
based on the similarity between samples to predict the cluster assignment.
However, common similarity metrics such as cosine similarity or (negative)
Euclidean distance are ineffective when applied to high-dimensional
data like images. Modern image clustering methods \cite{chang2017deep,huang2020deep,ji2019invariant,xie2016unsupervised,yang2017towards,yang2016joint},
therefore, leverage deep neural networks (e.g., CNNs, RNNs) to transform
high-dimensional data into low-dimensional representation vectors
in the latent space and perform clustering in that space. Ideally,
a good clustering model assigns data to clusters to keep inter-group
similarity low while maintaining high intra-group similarity. Most
existing deep clustering methods do not satisfy both of these properties.
For example, autoencoder-based clustering methods \cite{jiang2017variational,yang2017towards,yang2019deepcluster}
often learn representations that capture too much information including
distracting information like background or texture. This prevents
them from computing proper similarity scores between samples at the
cluster-level. Autoencoder-based methods have only been tested on
simple image datasets like MNIST. Another class of methods \cite{chang2017deep,huang2020deep,ji2019invariant}
directly use cluster-assignment probabilities rather than representation
vectors to compute the similarity between samples. These methods can
only differentiate objects belonging to different clusters but not
in the same cluster, hence, may incorrectly group distinct objects
into the same cluster. This leads to low intra-group similarity.

To address the limitations of existing methods, we propose a novel
framework for image clustering called Contrastive Representation Learning
and Clustering (CRLC). CRLC consists of two heads sharing the same
backbone network: a ``representation learning'' head (RL-head) that
outputs a continuous feature vector, and a ``clustering'' head (C-head)
that outputs a cluster-assignment probability vector. The RL-head
computes the similarity between objects at the instance level while
the C-head separates objects into different clusters. The backbone
network serves as a medium for information transfer between the two
heads, allowing the C-head to leverage disciminative fine-grained
patterns captured by the RL-head to extract correct coarse-grained
cluster-level patterns. Via the two heads, CRLC can effectively modulate
the inter-cluster and intra-cluster similarities between samples.
CRLC is trained in an end-to-end manner by minimizing a weighted sum
of two sample-oriented contrastive losses w.r.t. the two heads. To
ensure that the contrastive loss corresponding to the C-head leads
to the tightest InfoNCE lower bound \cite{poole2019variational},
we propose a novel critic called \textquotedblleft log-of-dot-product\textquotedblright{}
to be used in place of the conventional ``dot-product'' critic.

In our experiments, we show that CRLC significantly outperforms a
wide range of state-of-the-art single-stage clustering methods on
five standard image clustering datasets including CIFAR10/20, STL10,
ImageNet10/Dogs. The ``two-stage'' variant of CRLC also achieves
better results than SCAN - a powerful two-stage clustering method
on three challenging ImageNet subsets with 50, 100, and 200 classes.
When some labeled data are provided, CRLC, with only a small change
in its objective, can surpass many state-of-the-art semi-supervised
learning algorithms by a large margin.

In summary, our main contributions are:
\begin{enumerate}
\item A novel framework for joint representation learning and clustering
trained via two sample-oriented contrastive losses on feature and
probability vectors;
\item An optimal critic for the contrastive loss on probability vectors;
and,
\item Extensive experiments and ablation studies to validate our proposed
method against baselines.
\end{enumerate}

\section{Preliminaries}

\subsection{Representation learning by maximizing mutual information across different
views\protect\footnote{Here, we use ``views'' as a generic term to indicate different transformations
of the same data sample.}\label{subsec:ReprViewInfoMax}}

Maximizing mutual information across different views (or ViewInfoMax
for short) allows us to learn view-invariant representations that
capture the semantic information of data important for downstream
tasks (e.g., classification). This learning strategy is also the key
factor behind recent successes in representation learning \cite{hjelm2018learning,oord2018representation,tian2019contrastive,wu2018unsupervised}. 

Since direct computation of mutual information is difficult \cite{mcallester2020formal,song2019understanding},
people usually maximize the variational lower bounds of mutual information
instead. The most common lower bound is InfoNCE \cite{poole2019variational}
whose formula is given by:
\begin{align}
I(X,\tilde{X}) & \geq I_{\text{InfoNCE}}(X,\tilde{X})\label{eq:InfoNCE_1}\\
 & \triangleq\Expect_{p(x_{1:M})p(\tilde{x}|x_{1})}\left[\log\frac{e^{f(\tilde{x},x_{1})}}{\sum_{i=1}^{M}e^{f(\tilde{x},x_{i})}}\right]+\log M\label{eq:InfoNCE_2}\\
 & =-\Loss_{\text{contrast}}+\log M\label{eq:InfoNCE_3}
\end{align}
where $X$, $\tilde{X}$ denote random variables from 2 different
views. $x_{1:M}$ are $M$ samples from $p_{X}$, $\tilde{x}$ is
a sample from $p_{\tilde{X}}$ associated with $x_{1}$. $(\tilde{x},x_{1})$
is called a ``positive'' pair and $(\tilde{x},x_{i})$ ($i=2,...,M$)
are called ``negative'' pairs. $f(x,y)$ is a real value function
called ``critic'' that characterizes the similarity between $x$
and $y$. $\Loss_{\text{contrast}}$ is often known as the ``contrastive
loss'' in other works \cite{chen2020simple,tian2019contrastive}.

Since $\log\frac{e^{f(\tilde{x},x_{1})}}{\sum_{i=1}^{M}e^{f(\tilde{x},x_{i})}}\leq0$,
$I_{\text{InfoNCE}}(X,\tilde{X})$ is upper-bounded by $\log M$.
It means that: i) the InfoNCE bound is very loose if $I(X,\tilde{X})\gg\log M$,
and ii) by increasing $M$, we can achieve a better bound. Despite
being biased, $I_{\text{InfoNCE}}(X,\tilde{X})$ has much lower variance
than other unbiased lower bounds of $I(X,\tilde{X})$ \cite{poole2019variational},
which allows stable training of models.

\paragraph{Implementing the critic}

In practice, $f(\tilde{x},x_{i})$ is implemented as the scaled cosine
similarity between the representations of $\tilde{x}$ and $x_{i}$
as follows:
\begin{align}
f(\tilde{x},x_{i}) & =f(\tilde{z},z_{i})=\tilde{z}^{\top}z_{i}/\tau\label{eq:critic_continuous}
\end{align}
where $\tilde{z}$ and $z_{i}$ are \emph{unit-normed} representation
vectors of $\tilde{x}$ and $x_{i}$, respectively; $\left\Vert \tilde{z}\right\Vert _{2}=\left\Vert z_{i}\right\Vert _{2}=1$.
$\tau>0$ is the ``temperature'' hyperparameter. Interestingly,
$f$ in Eq.~\ref{eq:critic_continuous} matches the theoretically
optimal critic that leads to the tightest InfoNCE bound for unit-normed
representation vectors (detailed explanation in Appdx.~A.4)

In Eq.~\ref{eq:critic_continuous}, we use $f(\tilde{z},z_{i})$
instead of $f(\tilde{x},x_{i})$ to emphasize that the critic $f$
in this context is a function of representations. In regard to this,
we rewrite the contrastive loss in Eq.~\ref{eq:InfoNCE_3} as follows:
\begin{align}
\Loss_{\text{FC}} & =\Expect_{p(x_{1:M})p(\tilde{x}|x_{1})}\left[-\log\frac{e^{f(\tilde{z},z_{1})}}{\sum_{i=1}^{M}e^{f(\tilde{z},z_{i})}}\right]\label{eq:contrast_continuous_1}\\
 & =\Expect_{p(x_{1:M})p(\tilde{x}|x_{1})}\left[\tilde{z}^{\top}z_{1}/\tau-\log\sum_{i=1}^{M}\exp(\tilde{z}^{\top}z_{i}/\tau)\right]\label{eq:contrast_continuous_2}
\end{align}
where FC stands for ``feature contrastive''.

\section{Method}

\subsection{Clustering by maximizing mutual information across different views}

\begin{figure*}
\begin{centering}
\includegraphics[width=0.87\textwidth]{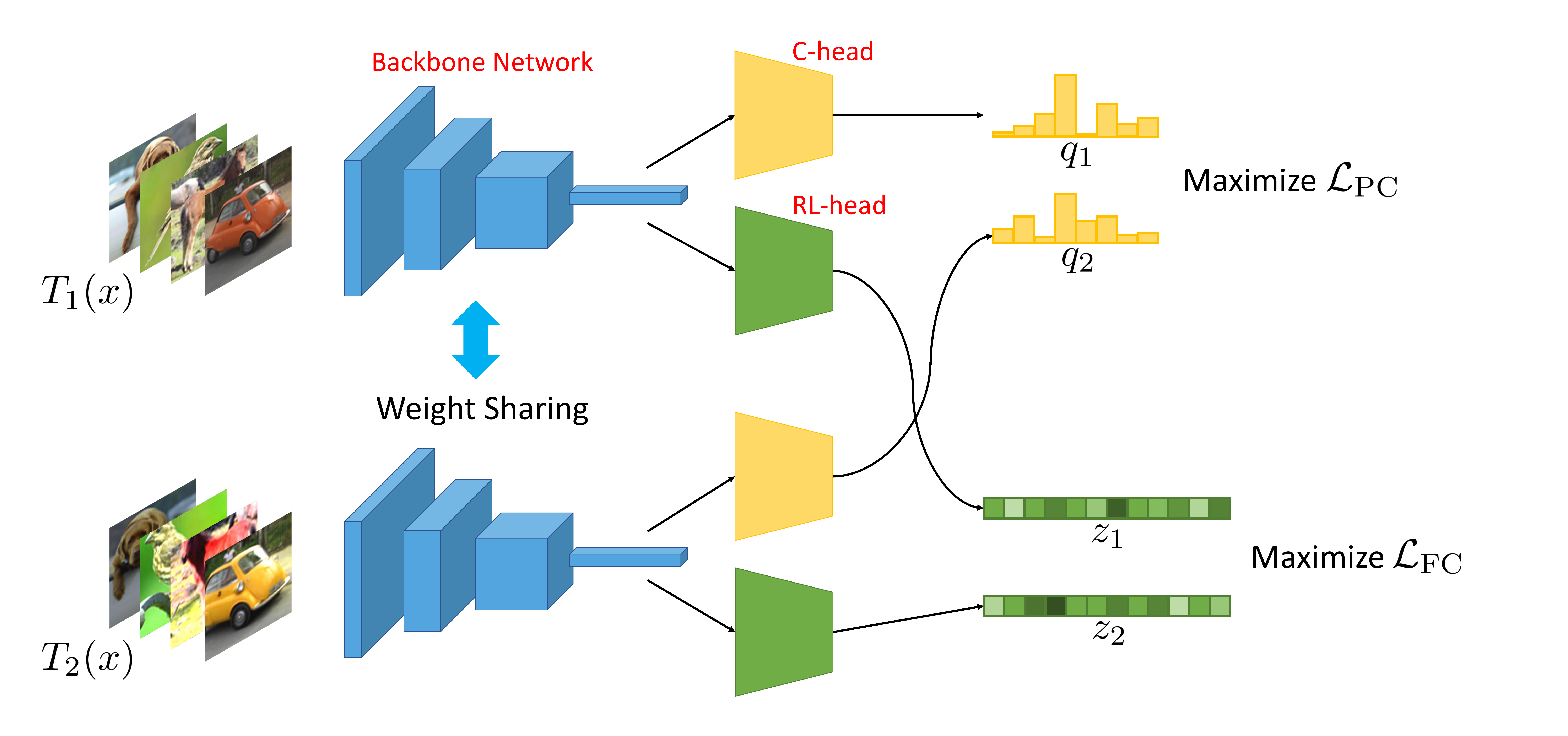}
\par\end{centering}
\caption{Overview of our proposed framework for Contrastive Representation
Learning and Clustering (CRLC). Our framework consists of a ``clustering''
head and a ``representation learning'' head sharing the same backbone
network. $x$ denotes an input images and $T_{1}(x)$, $T_{2}(x)$
denote two different transformations of $x$.\label{fig:ModelArch}}
\end{figure*}

In the clustering problem, we want to learn a parametric classifier
$s_{\theta}$ that maps each unlabeled sample $x_{i}$ to a cluster-assignment
probability vector $q_{i}=(q_{i,1},...,q_{i,C})$ ($C$ is the number
of clusters) whose component $q_{i,c}$ characterizes how likely $x_{i}$
belongs to the cluster $c$ ($c\in\{1,...,C\}$). Intuitively, we
can consider $q_{i}$ as a representation of $x_{i}$ and use this
vector to capture the cluster-level information in $x_{i}$ by leveraging
the ``ViewInfoMax'' idea discussed in Section \ref{subsec:ReprViewInfoMax}.
It leads to the following loss for clustering:

\begin{align}
\Loss_{\text{cluster}} & =\Expect_{p(x_{1:M})p(\tilde{x}|x_{1})}\left[-\log\frac{e^{f(\tilde{q},q_{1})}}{\sum_{i=1}^{M}e^{f(\tilde{q},q_{i})}}\right]-\lambda H(\tilde{Q}_{\text{avg}})\label{eq:contrast_categorical_1}\\
 & =\Loss_{\text{PC}}-\lambda H(\tilde{Q}_{\text{avg}})\label{eq:contrast_categorical_2}
\end{align}
where $\lambda\geq0$ is a coefficient; $\tilde{q}$, $q_{i}$ are
probability vectors associated with $\tilde{x}$ and $x_{i}$, respectively.
$\Loss_{\text{PC}}$ is the \emph{probability contrastive loss} similar
to $\Loss_{\text{FC}}$ (Eq.~\ref{eq:contrast_continuous_1}) but
with feature vectors replaced by probability vectors. $H$ is the
entropy of the \emph{marginal} cluster-assignment probability $\tilde{q}_{\text{avg}}=\Expect_{p(x_{1})p(\tilde{x}|x_{1})}\left[\tilde{q}\right]$.
Here, we maximize $H(\tilde{Q}_{\text{avg}})$ to avoid a degenerate
solution in which all samples fall into the same cluster (e.g., $\tilde{q}$
is one-hot for all samples). However, it is free to use other regularizers
on $\tilde{q}_{\text{avg}}$ rather than $-H(\tilde{Q}_{\text{avg}})$.

\paragraph{Choosing a suitable critic}

It is possible to use the conventional ``dot-product'' critic for
$\Loss_{\text{PC}}$ as for $\Loss_{\text{FC}}$ (Eq.~\ref{eq:critic_continuous}).
However, this will lead to suboptimal results (Section~\ref{par:Comparison-of-different-critics})
since $\Loss_{\text{PC}}$ is applied to categorical probability vectors
rather than continuous feature vectors. Therefore, we need to choose
a suitable critic for $\Loss_{\text{PC}}$ so that the InfoNCE bound
associated with $\Loss_{\text{PC}}$ is tightest. Ideally, $f(\tilde{x},x_{i})$
should match the theoretically optimal critic $f^{*}(\tilde{x},x_{i})$
which is proportional to $\log p(\tilde{x}|x_{i})$ (detailed explanation
in Appdx.~A.3). Denoted by $\tilde{y}$ and $y_{i}$ the cluster
label of $\tilde{x}$ and $x_{i}$ respectively, we then have: 
\begin{align}
\log p(\tilde{x}|x_{i}) & \approx\log\sum_{c=1}^{C}p(\tilde{y}=c|y_{i}=c)\nonumber \\
 & \propto\log\sum_{c=1}^{C}\tilde{q}_{c}q_{i,c}=\log(\tilde{q}^{\top}q_{i})\label{eq:log_dot_prod_critic}
\end{align}
Thus, the most suitable critic is $f(\tilde{q},q_{i})=\log(\tilde{q}^{\top}q_{i})$
which we refer to as the \emph{``log-of-dot-product''} critic. This
critic achieves its maximum value when $\tilde{q}$ and $q_{i}$ are
the same one-hot vectors and its minimum value when $\tilde{q}$ and
$q_{i}$ are different one-hot vectors. Apart from this critic, we
also list other nonoptimal critics in Appdx.~A.1. Empirical comparison
of the ``log-of-dot-product'' critic with other critics is provided
in Section~\ref{par:Comparison-of-different-critics}. 

In addition, to avoid the \emph{gradient saturation} problem of minimizing
$\Loss_{\text{PC}}$ when probabilities are close to one-hot (explanation
in Appdx.~A.5), we smooth out the probabilities as follows:
\[
q=(1-\gamma)q+\gamma r
\]
where $r=\left(\frac{1}{C},...,\frac{1}{C}\right)$ is the uniform
probability vector over $C$ classes; $0\leq\gamma\leq1$ is the smoothing
coefficient set to 0.01 if not otherwise specified.

\paragraph{Implementing the contrastive probability loss\label{par:Implementing-LossPC}}

To implement $\Loss_{\text{PC}}$, we can use either the SimCLR framework
\cite{chen2020simple} or the MemoryBank framework \cite{wu2018unsupervised}.
If the SimCLR framework is chosen, both $\tilde{q}$ and $q_{i}$
($i\in\{1,...,M\}$) are computed directly from $\tilde{x}$ and $x_{i}$
respectively via the parametric classifier $s_{\theta}$. On the other
hand, if the MemoryBank framework is chosen, we maintain a nonparametric
memory bank $\mathcal{M}$ - a matrix of size $N\times C$ containing
the cluster-assignment probabilities of all $N$ training samples,
and update its rows once a new probability is computed as follows:
\begin{equation}
q_{n,t+1}=\alpha q_{n,t}+(1-\alpha)\hat{q}_{n}\label{eq:momentum_update_LossPC}
\end{equation}
where $\alpha$ is the momentum, which is set to 0.5 in our work if
not otherwise specified; $q_{n,t}$ is the probability vector of the
training sample $x_{n}$ at step $t$ corresponding to the $n$-th
row of $\mathcal{M}$; $\hat{q}_{n}=s_{\theta}(x_{n})$ is the new
probability vector. Then, except $\tilde{q}$ computed via $s_{\theta}$
as normal, all $q_{i}$ in Eq.~\ref{eq:contrast_categorical_1} are
sampled uniformly from $\mathcal{M}$. At step 0, all the rows of
$\mathcal{M}$ are initialized with the same probability of $\left(\frac{1}{C},...,\frac{1}{C}\right)$.
We also tried implementing $\Loss_{\text{PC}}$ using the MoCo framework
\cite{he2020momentum} but found that it leads to unstable training.
The main reason is that during the early stage of training, the EMA
model in MoCo often produces inconsistent cluster-assignment probabilities
for different views.

\subsection{Incorporating representation learning}

Due to the limited representation capability of categorical probability
vectors, models trained by minimizing the loss $\Loss_{\text{cluster}}$
in Eq.\ref{eq:contrast_categorical_1} are not able to discriminate
objects in the same cluster. Thus, they may capture suboptimal cluster-level
patterns, which leads to unsatisfactory results.

To overcome this problem, we propose to combine clustering with contrastive
representation learning into a unified framework called CRLC\footnote{CRLC stands for Contrastive Representation Learning and Clustering.}.
As illustrated in Fig.~\ref{fig:ModelArch}, CRLC consists of a ``clustering''
head (C-head) and a ``representation learning'' head (RL-head) sharing
the same backbone network. The backbone network is usually a convolutional
neural network which maps an input image $x$ into a hidden vector
$h$. Then, $h$ is fed to the C-head and the RL-head to produce a
cluster-assignment probability vector $q$ and a continuous feature
vector $z$, respectively. We simultaneously apply the clustering
loss $\Loss_{\text{cluster}}$ (Eq.~\ref{eq:contrast_categorical_2})
and the feature contrastive loss $\Loss_{\text{FC}}$ (Eq.~\ref{eq:contrast_continuous_2})
on $q$ and $z$ respectively and train the whole model with the weighted
sum of $\Loss_{\text{cluster}}$ and $\Loss_{\text{FC}}$ as follows:
\begin{align}
\Loss_{\text{\text{CRLC}}} & =\Loss_{\text{cluster}}+\lambda\Loss_{\text{FC}}\nonumber \\
 & =\Loss_{\text{PC}}-\lambda_{1}H(\tilde{Q}_{\text{avg}})+\lambda_{2}\Loss_{\text{FC}}\label{eq:Loss_CRLC}
\end{align}
where $\lambda_{1},\lambda_{2}\geq0$ are coefficients.

\subsection{A simple extension to semi-supervised learning}

Although $\Model$ is originally proposed for unsupervised clustering,
it can be easily extended to semi-supervised learning (SSL). There
are numerous ways to adjust CRLC so that it can incorporate labeled
data during training. However, within the scope of this work, we only
consider a simple approach which is adding a crossentropy loss on
labeled data to $\Loss_{\text{CRLC}}$. The new loss is given by:

\begin{align}
\Loss_{\text{CRLC-semi}} & =\Loss_{\text{CRLC}}+\lambda\Expect_{(x_{l},y_{l})\sim\Data_{l}}\left[-\log p(y_{l}|x_{l})\right]\nonumber \\
 & =\Loss_{\text{PC}}-\lambda_{1}H(\tilde{Q}_{\text{avg}})+\lambda_{2}\Loss_{\text{FC}}+\lambda_{3}\Loss_{\text{xent}}\label{eq:Loss_CRLC_semi}
\end{align}

We call this variant of CRLC ``CRLC-semi''. Despite its simplicity,
we will empirically show that CRLC-semi outperforms many state-of-the-art
SSL methods when only few labeled samples are available. We conjecture
that the clustering objective arranges the data into disjoint clusters,
making classification easier.

\section{Related Work}

There are a large number of clustering and representation learning
methods in literature. However, within the scope of this paper, we
only discuss works in two related topics, namely, contrastive learning
and deep clustering. 

\subsection{Contrastive Learning}

Despite many recent successes in learning representations, the idea
of contrastive learning appeared long time ago. In 2006, Hadsell et.
al. \cite{hadsell2006dimensionality} proposed a max-margin contrastive
loss and linked it to a mechanical spring system. In fact, from a
probabilistic view, contrastive learning arises naturally when working
with energy-based models. For example, in many problems, we want to
maximize $\log p(y|x)=\log\frac{e^{f(y,x))}}{\sum_{y'\in\mathcal{Y}}e^{f(y',x)}}$
where $y$ is the output associated with a context $x$ and $\mathcal{Y}$
is the set of all possible outputs or vocab. This is roughly equivalent
to maximizing $f(y,x)$ and minimizing $f(y',x)$ for all $y'\neq y$
but in a normalized setting. However, in practice, the size of $\mathcal{Y}$
is usually very large, making the computation of $p(y|x)$ expensive.
This problem was addressed in \cite{mnih2012fast,wu2018unsupervised}
by using Noise Contrastive Estimation (NCE) \cite{gutmann2010noise}
to approximate $p(y|x)$. The basic idea of NCE is to transform the
density estimation problem into a binary classification problem: ``Whether
samples are drawn from the data distribution or from a known noise
distribution?''. Based on NCE, Mikolov et. al. \cite{mikolov2013distributed}
and Oord et. al. \cite{oord2018representation} derived a simpler
contrastive loss which later was referred to as the InfoNCE loss \cite{poole2019variational}
and was adopted by many subsequent works \cite{chen2020simple,dosovitskiy2014discriminative,he2020momentum,misra2020self,tian2019contrastive,ye2019unsupervised}
for learning representations.

Recently, there have been several attempts to leverage inter-sample
statistics obtained from clustering to improve representation learning
on a large scale \cite{asano2019self,caron2018deep,zhuang2019local}.
PCL \cite{li2020prototypical} alternates between clustering data
via K-means and contrasting samples based on their views and their
assigned cluster centroids (or prototypes). SwAV \cite{caron2020unsupervised}
does not contrast two sample views directly but uses one view to predict
the code of assigning the other view to a set of learnable prototypes.
InterCLR \cite{xie2020delving} and ODC \cite{zhan2020online} avoid
offline clustering on the entire training dataset after each epoch
by storing a pseudo-label for every sample in the memory bank (along
with the feature vector) and maintaining a set of cluster centroids.
These pseudo-labels and cluster centroids are updated on-the-fly at
each step via mini-batch K-means.

\subsection{Deep Clustering}

Traditional clustering algorithms such as K-means or Gaussian Mixture
Model (GMM) are mainly designed for low-dimensional vector-like data,
hence, do not perform well on high-dimensional structural data like
images. Deep clustering methods address this limitation by leveraging
the representation power of deep neural networks (e.g., CNNs, RNNs)
to effectively transform data into low-dimensional feature vectors
which are then used as inputs for a clustering objective. For example,
DCN \cite{yang2017towards} applies K-means to the latent representations
produced by an auto-encoder. The reconstruction loss and the K-means
clustering loss are minimized simultaneously. DEC \cite{xie2016unsupervised},
by contrast, uses only an encoder rather than a full autoencoder like
DCN to compute latent representations. This encoder and the cluster
centroids are learned together via a clustering loss proposed by the
authors. JULE \cite{yang2016joint} uses a RNN to implement agglomerative
clustering on top of the representations outputted by a CNN and trains
the two networks in an end-to-end manner. VaDE \cite{jiang2017variational}
regards clustering as an inference problem and learns the cluster-assignment
probabilities of data using a variational framework \cite{kingma2013auto}.
Meanwhile, DAC \cite{chang2017deep} treats clustering as a binary
classification problem: ``Whether a pair of samples belong to the
same cluster or not?''. To obtain a pseudo label for a pair, the
cosine similarity between the cluster-assignment probabilities of
the two samples in that pair is compared with an adaptive threshold.
IIC \cite{ji2019invariant} learns cluster assignments via maximizing
the mutual information between clusters under two different data augmentations.
PICA \cite{huang2020deep}, instead, minimizes the contrastive loss
derived from the the mutual information in IIC. While the cluster
contrastive loss in PICA is cluster-oriented and can have at most
$C$ negative pairs ($C$ is the number of clusters). Our probability
contrastive loss, by contrast, is sample-oriented and can have as
many negative pairs as the number of training data. Thus, in theory,
our proposed model can capture more information than PICA. In real
implementation, in order to gain more information from data, PICA
has to make use of the ``over-clustering'' trick \cite{ji2019invariant}.
It alternates between minimizing $\Loss_{\text{PICA}}$ for $C$ clusters
and minimizing $\Loss_{\text{PICA}}$ for $kC$ clusters ($k>1$ denotes
the ``over-clustering'' coefficient). DRC \cite{zhong2020deep}
and CC \cite{li2020contrastive} enhances PICA by combining clustering
with contrastive representation learning, which follows the same paradigm
as our proposed CRLC. However, like PICA, DRC and CC uses cluster-oriented
representations rather than sample-oriented representations.

In addition to end-to-end deep clustering methods, some multi-stage
clustering methods have been proposed recently \cite{park2020improving,van2020scan}.
The most notable one is SCAN \cite{van2020scan}. This method uses
representations learned via contrastive learning during the first
stage to find nearest neighbors for every sample in the training set.
In the second stage, neighboring samples are forced to have similar
cluster-assignment probabilities. Our probability contrastive loss
can easily be extended to handle neighboring samples (see Section~\ref{subsec:Two-stage-training}).

\section{Experiments}

\begin{table*}
\begin{centering}
\begin{tabular}{|c|ccc|ccc|ccc|ccc|ccc|}
\hline 
{\small{}Dataset} & \multicolumn{3}{c|}{{\small{}CIFAR10}} & \multicolumn{3}{c|}{{\small{}CIFAR20}} & \multicolumn{3}{c|}{{\small{}STL10}} & \multicolumn{3}{c|}{{\small{}ImageNet10}} & \multicolumn{3}{c|}{{\small{}ImageNet-Dogs}}\tabularnewline
\hline 
{\small{}Metric} & {\small{}ACC} & {\small{}NMI} & {\small{}ARI} & {\small{}ACC} & {\small{}NMI} & {\small{}ARI} & {\small{}ACC} & {\small{}NMI} & {\small{}ARI} & {\small{}ACC} & {\small{}NMI} & {\small{}ARI} & {\small{}ACC} & {\small{}NMI} & {\small{}ARI}\tabularnewline
\hline 
\hline 
{\small{}JULE \cite{yang2016joint}} & {\small{}27.2} & {\small{}19.2} & {\small{}13.8} & {\small{}13.7} & {\small{}10.3} & {\small{}3.3} & {\small{}27.7} & {\small{}18.2} & {\small{}16.4} & {\small{}30.0} & {\small{}17.5} & {\small{}13.8} & {\small{}13.8} & {\small{}5.4} & {\small{}2.8}\tabularnewline
\hline 
{\small{}DEC \cite{xie2016unsupervised}} & {\small{}30.1} & {\small{}25.7} & {\small{}16.1} & {\small{}18.5} & {\small{}13.6} & {\small{}5.0} & {\small{}35.9} & {\small{}27.6} & {\small{}18.6} & {\small{}38.1} & {\small{}28.2} & {\small{}20.3} & {\small{}19.5} & {\small{}12.2} & {\small{}7.9}\tabularnewline
\hline 
{\small{}DAC \cite{chang2017deep}} & {\small{}52.2} & {\small{}39.6} & {\small{}30.6} & {\small{}23.8} & {\small{}18.5} & {\small{}8.8} & {\small{}47.0} & {\small{}36.6} & {\small{}25.7} & {\small{}52.7} & {\small{}39.4} & {\small{}30.2} & {\small{}27.5} & {\small{}21.9} & {\small{}11.1}\tabularnewline
\hline 
{\small{}DDC \cite{chang2019deep}} & {\small{}52.4} & {\small{}42.4} & {\small{}32.9} & {\small{}-} & {\small{}-} & {\small{}-} & {\small{}48.9} & {\small{}37.1} & {\small{}26.7} & {\small{}57.7} & {\small{}43.3} & {\small{}34.5} & {\small{}-} & {\small{}-} & {\small{}-}\tabularnewline
\hline 
{\small{}DCCM \cite{wu2019deep}} & {\small{}62.3} & {\small{}49.6} & {\small{}40.8} & {\small{}32.7} & {\small{}28.5} & {\small{}17.3} & {\small{}48.2} & {\small{}37.6} & {\small{}26.2} & {\small{}70.1} & {\small{}60.8} & {\small{}55.5} & {\small{}38.3} & {\small{}32.1} & {\small{}18.2}\tabularnewline
\hline 
{\small{}IIC \cite{ji2019invariant}} & {\small{}61.7} & {\small{}-} & {\small{}-} & {\small{}25.7} & {\small{}-} & {\small{}-} & {\small{}61.0} & {\small{}-} & {\small{}-} & {\small{}-} & {\small{}-} & {\small{}-} & {\small{}-} & {\small{}-} & {\small{}-}\tabularnewline
\hline 
{\small{}MCR2 \cite{yu2020learning}} & {\small{}68.4} & {\small{}63.0} & {\small{}50.8} & {\small{}34.7} & {\small{}36.2} & {\small{}16.7} & {\small{}49.1} & {\small{}44.6} & {\small{}29.0} & {\small{}-} & {\small{}-} & {\small{}-} & {\small{}-} & {\small{}-} & {\small{}-}\tabularnewline
\hline 
{\small{}PICA \cite{huang2020deep}} & {\small{}69.6} & {\small{}59.1} & {\small{}51.2} & {\small{}33.7} & {\small{}31.0} & {\small{}17.1} & {\small{}71.3} & {\small{}61.1} & {\small{}53.1} & {\small{}87.0} & {\small{}80.2} & {\small{}76.1} & {\small{}35.2} & {\small{}35.2} & {\small{}20.1}\tabularnewline
\hline 
{\small{}DRC \cite{zhong2020deep}} & {\small{}72.7} & {\small{}62.1} & {\small{}54.7} & {\small{}36.7} & {\small{}35.6} & {\small{}20.8} & {\small{}74.7} & {\small{}64.4} & {\small{}56.9} & \textbf{\small{}88.4} & {\small{}83.0} & \textbf{\small{}79.8} & {\small{}38.9} & {\small{}38.4} & {\small{}23.3}\tabularnewline
\hline 
\hline 
{\small{}C-head only} & {\small{}66.9} & {\small{}56.9} & {\small{}47.5} & {\small{}37.7} & {\small{}35.7} & {\small{}21.6} & {\small{}61.2} & {\small{}52.7} & {\small{}43.4} & {\small{}80.0} & {\small{}75.2} & {\small{}67.6} & {\small{}36.3} & {\small{}37.5} & {\small{}19.8}\tabularnewline
\hline 
{\small{}CRLC} & \textbf{\small{}79.9} & \textbf{\small{}67.9} & \textbf{\small{}63.4} & \textbf{\small{}42.5} & \textbf{\small{}41.6} & \textbf{\small{}26.3} & \textbf{\small{}81.8} & \textbf{\small{}72.9} & \textbf{\small{}68.2} & {\small{}85.4} & \textbf{\small{}83.1} & {\small{}75.9} & \textbf{\small{}46.1} & \textbf{\small{}48.4} & \textbf{\small{}29.7}\tabularnewline
\hline 
\end{tabular}
\par\end{centering}
\caption{End-to-end clustering results on 5 standard image datasets. Due to
space limit, we only show the means of the results. For the standard
deviations, please refer to Appdx.~A.8.\label{tab:N2N-clustering-results}}
\end{table*}

\begin{table*}
\begin{centering}
{\footnotesize{}}%
\begin{tabular}{|c|cccc|cccc|cccc|}
\hline 
{\footnotesize{}ImageNet} & \multicolumn{4}{c|}{{\footnotesize{}50 classes}} & \multicolumn{4}{c|}{{\footnotesize{}100 classes}} & \multicolumn{4}{c|}{{\footnotesize{}200 classes}}\tabularnewline
\hline 
{\footnotesize{}Metric} & {\footnotesize{}ACC} & {\footnotesize{}ACC5} & {\footnotesize{}NMI} & {\footnotesize{}ARI} & {\footnotesize{}ACC} & {\footnotesize{}ACC5} & {\footnotesize{}NMI} & {\footnotesize{}ARI} & {\footnotesize{}ACC} & {\footnotesize{}ACC5} & {\footnotesize{}NMI} & {\footnotesize{}ARI}\tabularnewline
\hline 
\hline 
{\footnotesize{}K-means \cite{van2020scan}} & {\footnotesize{}65.9} & {\footnotesize{}-} & {\footnotesize{}77.5} & {\footnotesize{}57.9} & {\footnotesize{}59.7} & {\footnotesize{}-} & {\footnotesize{}76.1} & {\footnotesize{}50.8} & {\footnotesize{}52.5} & {\footnotesize{}-} & {\footnotesize{}75.5} & {\footnotesize{}43.2}\tabularnewline
\hline 
{\footnotesize{}SCAN \cite{van2020scan}} & {\footnotesize{}75.1} & {\footnotesize{}91.9} & {\footnotesize{}80.5} & \textbf{\footnotesize{}63.5} & {\footnotesize{}66.2} & {\footnotesize{}88.1} & {\footnotesize{}78.7} & {\footnotesize{}54.4} & {\footnotesize{}56.3} & {\footnotesize{}80.3} & {\footnotesize{}75.7} & {\footnotesize{}44.1}\tabularnewline
\hline 
\hline 
{\footnotesize{}two-stage CRLC} & \textbf{\footnotesize{}75.4} & \textbf{\footnotesize{}93.3} & \textbf{\footnotesize{}80.6} & {\footnotesize{}63.4} & \textbf{\footnotesize{}66.7} & \textbf{\footnotesize{}88.3} & \textbf{\footnotesize{}79.2} & \textbf{\footnotesize{}55.0} & \textbf{\footnotesize{}57.9} & \textbf{\footnotesize{}80.6} & \textbf{\footnotesize{}76.4} & \textbf{\footnotesize{}45.9}\tabularnewline
\hline 
\end{tabular}{\footnotesize\par}
\par\end{centering}
\caption{Two-stage clustering results on ImageNet50/100/200.\label{tab:Two-stage-results-ImageNet}}
\end{table*}

\paragraph{Dataset}

We evaluate our proposed method on 5 standard datasets for image clustering
which are CIFAR10/20 \cite{krizhevsky2009learning}, STL10 \cite{coates2011analysis},
ImageNet10 \cite{deng2009imagenet,chang2017deep}, and ImageNet-Dogs
\cite{deng2009imagenet,chang2017deep}, and on 3 big ImageNet subsets
namely ImageNet50/100/200 with 50/100/200 classes, respectively \cite{deng2009imagenet,van2020scan}.
A description of these datasets is given in Appdx.~A.6. Our data
augmentation setting follows \cite{he2020momentum,wu2018unsupervised}.
We first randomly crop images to a desirable size (32$\times$32 for
CIFAR, 96$\times$96 for STL10, and 224$\times$224 for ImageNet subsets).
Then, we perform random horizontal flip, random color jittering, and
random grayscale conversion. For datasets which are ImageNet subsets,
we further apply Gaussian blurring at the last step \cite{chen2020simple}.
Similar to previous works \cite{chang2017deep,ji2019invariant,huang2020deep},
both the training and test sets are used for CIFAR10, CIFAR20 and
STL10 while only the training set is used for other datasets. We also
provide results where only the training set is used for CIFAR10, CIFAR20
and STL10 in Appdx.~A.8. For STL10, 100,000 auxiliary unlabeled samples
are additionally used to train the ``representation learning'' head.
However, when training the ``clustering'' head, these auxiliary
samples are not used since their classes may not appear in the training
set.

\paragraph{Model architecture and training setups}

Following previous works \cite{huang2020deep,ji2019invariant,van2020scan,zhong2020deep},
we adopt ResNet34 and ResNet50 \cite{he2016deep} as the backbone
network when working on the 5 standard datasets and on the 3 big ImageNet
subsets, respectively. The ``representation learning'' head (RL-head)
and the ``clustering'' head (C-head) are two-layer neural networks
with ReLU activations. The length of the output vector of the RL-head
is 128. The temperature $\tau$ (Eq.~\ref{eq:contrast_continuous_1})
is fixed at 0.1. To reduce variance in learning, we train our model
with 10 C-subheads\footnote{The final $\Loss_{\text{cluster}}$ in Eq.~\ref{eq:contrast_categorical_2}
is the average of $\Loss_{\text{cluster}}$ of these C-subheads.} similar to \cite{ji2019invariant}. This only adds little extra computation
to our model. However, unlike \cite{huang2020deep,ji2019invariant,zhong2020deep},
we do \emph{not} use an auxiliary ``over-clustering'' head to exploit
additional information from data since we think our RL-head can do
that effectively. 

Training setups for end-to-end and two-stage clustering are provided
in Appdx.~A.7.

\paragraph{Evaluation metrics}

We use three popular clustering metrics namely Accuracy (ACC), Normalized
Mutual Information (NMI), Adjusted Rand Index (ARI) for evaluation.
For unlabeled data, ACC is computed via the Kuhn-Munkres algorithm.
All of these metrics scale from 0 to 1 and higher values indicate
better performance. In this work, we convert the {[}0, 1{]} range
into percentage.

\subsection{Clustering\label{subsec:Results-for-clustering}}

\subsubsection{End-to-end training}

Table \ref{tab:N2N-clustering-results} compares the performance of
our proposed CRLC with a wide range of state-of-the-art deep clustering
methods. CRLC clearly outperforms all baselines by a large margin
on most datasets. For example, in term of clustering accuracy (ACC),
our method improves over the best baseline (DRC \cite{zhong2020deep})
by 5-7\% on CIFAR10/20, STL10, and ImageNet-Dogs. Gains are even larger
if we compare with methods that do not explicitly learn representations
such as PICA \cite{huang2020deep} and IIC \cite{ji2019invariant}.
CRLC only performs worse than DRC on ImageNet10, which we attribute
to our selection of hyperparameters. In addition, even when only
the ``clustering'' head is used, our method still surpasses most
of the baselines (e.g., DCCM, IIC). These results suggest that: i)
we can learn semantic clusters from data just by minimizing the probability
contrastive loss, and ii) combining with contrastive representation
learning improves the quality of the cluster assignment.

To have a better insight into the performance of CRLC, we visualize
some success and failure cases in Fig.~\ref{fig:STL10-samples-w-neighbors}
(and also in Appdx.~A.11). We see that samples predicted correctly
with high confidence are usually representative for the cluster they
belong to. It suggests that CRLC has learned coarse-grained patterns
that separate objects at the cluster level. Besides, CRLC has also
captured fine-grained instance-level information, thus, is able to
find nearest neighbors with great similarities in shape, color and
texture to the original image. Another interesting thing from Fig.~\ref{fig:STL10-samples-w-neighbors}
is that the predicted label of a sample is often strongly correlated
with that of the majority of its neighbors. It means that: i) CRLC
has learned a smooth mapping from images to cluster assignments, and
ii) CRLC tends to make ``collective'' errors (the first and third
rows in Fig.~\ref{fig:STL10-neighbors-false-positive}). Other kinds
of errors may come from the closeness between classes (e.g., horse
vs. dog), or from some adversarial signals in the input (e.g., the
second row in Fig.~\ref{fig:STL10-neighbors-false-negative}). Solutions
for fixing these errors are out of scope of this paper and will be
left for future work.

\begin{figure*}
\begin{centering}
\subfloat[correct\label{fig:STL10-neighbors-correct}]{\begin{centering}
\includegraphics[height=0.4\textwidth]{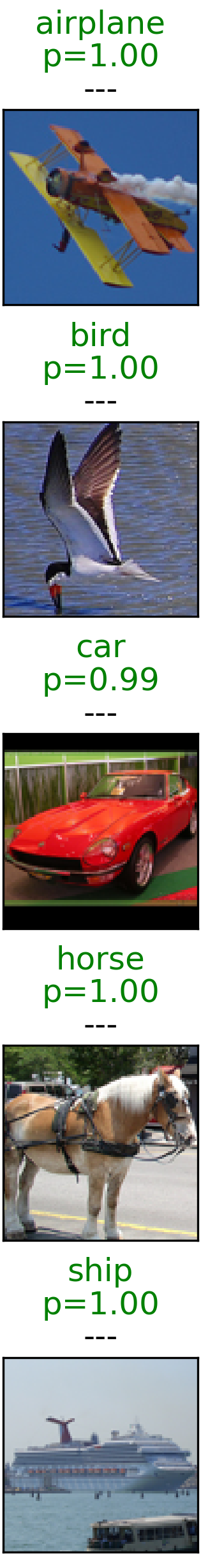}\hspace{0.01\textwidth}\includegraphics[height=0.4\textwidth]{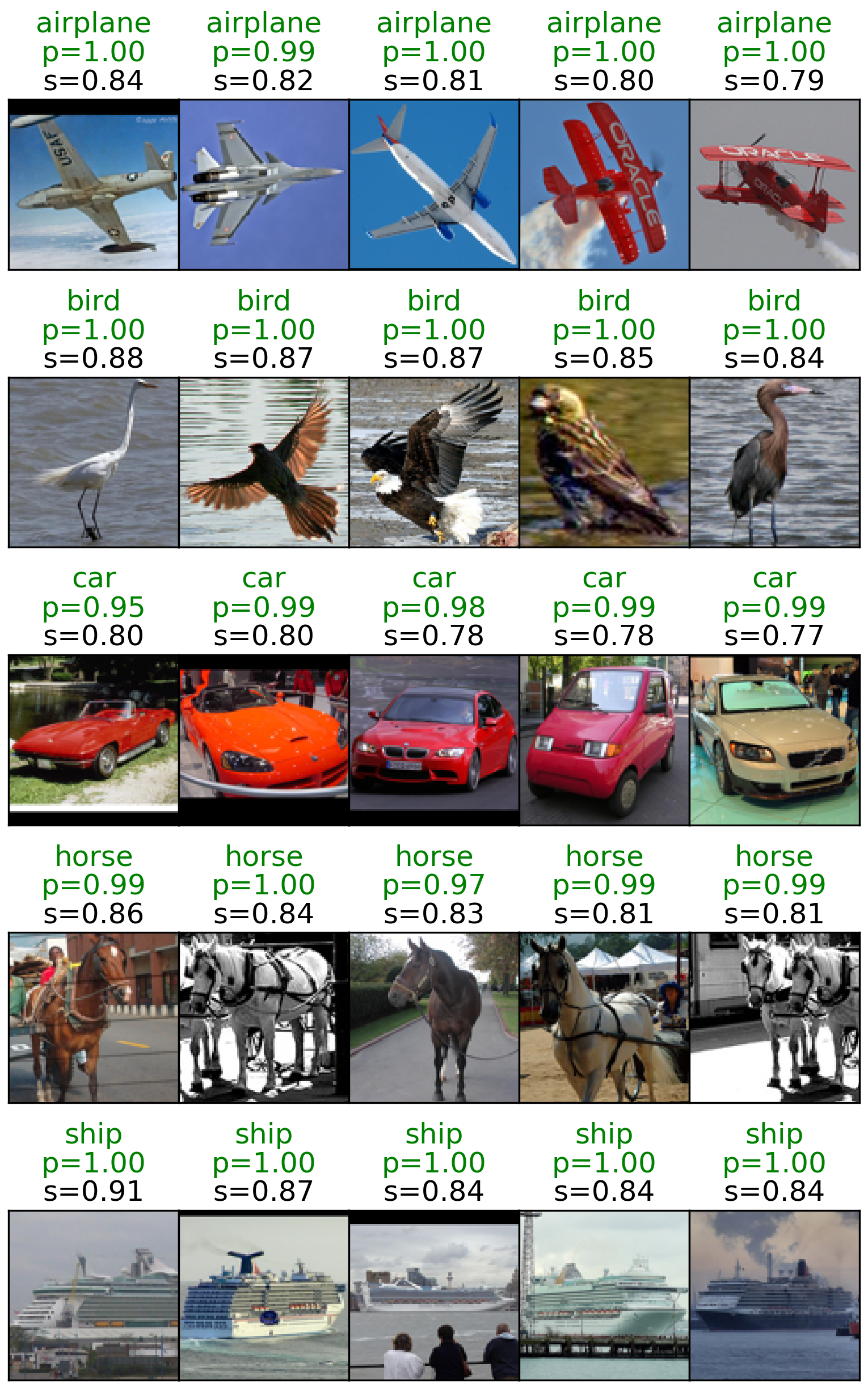}
\par\end{centering}
}\hspace{0.01\textwidth}\subfloat[false negative\label{fig:STL10-neighbors-false-negative}]{\begin{centering}
\includegraphics[height=0.4\textwidth]{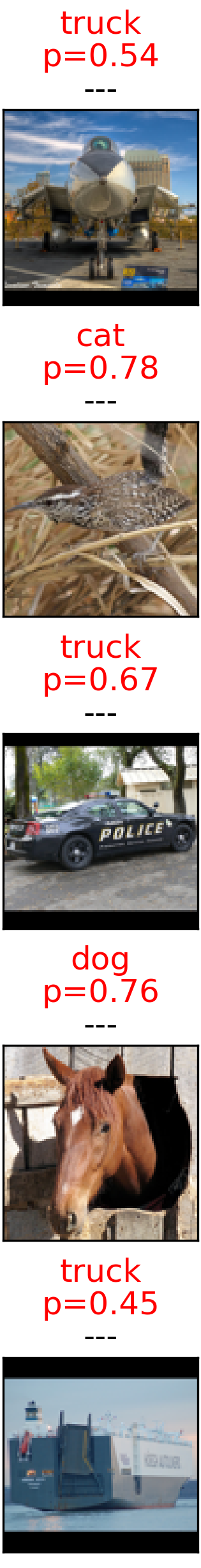}\hspace{0.01\textwidth}\includegraphics[height=0.4\textwidth]{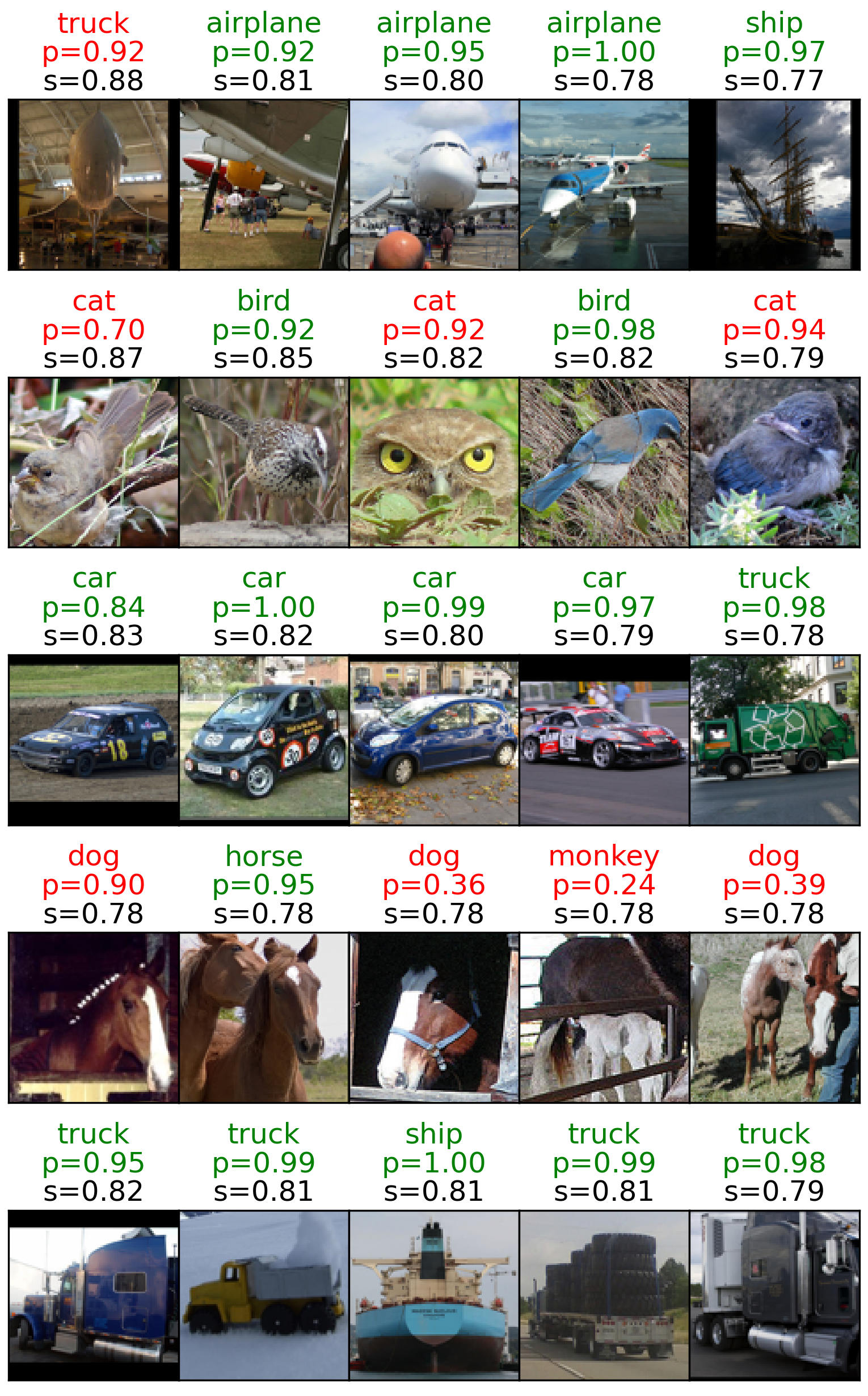}
\par\end{centering}
}\hspace{0.01\textwidth}\subfloat[false positive\label{fig:STL10-neighbors-false-positive}]{\begin{centering}
\includegraphics[height=0.4\textwidth]{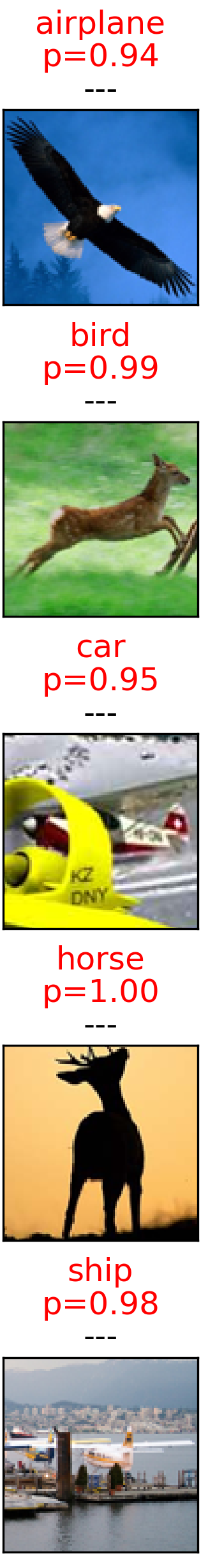}\hspace{0.01\textwidth}\includegraphics[height=0.4\textwidth]{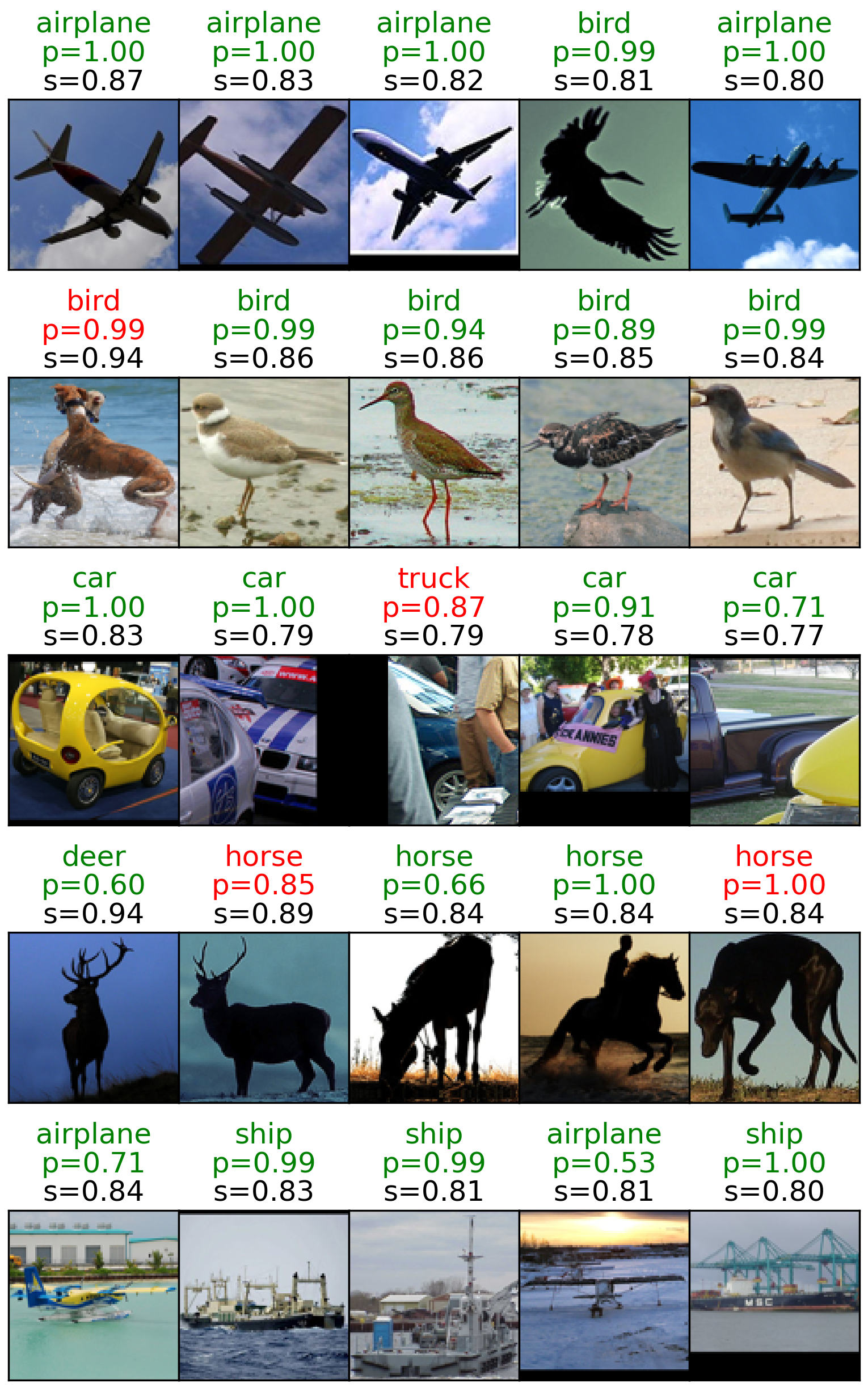}
\par\end{centering}
}
\par\end{centering}
\caption{STL10 samples of 5 classes correctly (green) and incorrectly (red)
predicted by CRLC. For each subplot, we show reference samples on
the leftmost column and their nearest neighbors on the right. Neighbors
are retrieved based on the normalized cosine similarity (``s'')
between the feature vectors of two samples. ``p'' denotes the confidence
probability.\label{fig:STL10-samples-w-neighbors}}
\end{figure*}

\subsubsection{Two-stage training\label{subsec:Two-stage-training}}

Although CRLC is originally proposed as an end-to-end clustering algorithm,
it can be easily extended to a two-stage clustering algorithm similar
to SCAN \cite{van2020scan}. To do that, we first pretrain the RL-head
and the backbone network with $\Loss_{\text{FC}}$ (Eq.~\ref{eq:contrast_continuous_2}).
Next, for every sample in the training data, we find a set of $K$
nearest neighbors based on the cosine similarity between feature vectors
produced by the pretrained network. In the second stage, we train
the C-head by minimizing $\Loss_{\text{cluster}}$ (Eq.~\ref{eq:contrast_categorical_2})
with the positive pair consisting of a sample and its neighbor drawn
from a set of $K$ nearest neighbors. We call this variant of CRLC
``two-stage'' CRLC. In fact, we did try training both the C-head
and the RL-head in the second stage by minimizing $\Loss_{\text{CRLC}}$
but could not achieve good results compared to training only the C-head.
We hypothesize that finetuning the RL-head causes the model to capture
too much fine-grained information and ignore important cluster-level
information, which hurts the clustering performance.

\begin{table}
\begin{centering}
{\footnotesize{}}%
\begin{tabular}{|c|c|c|c|}
\hline 
{\footnotesize{}Dataset} & \multicolumn{3}{c|}{{\footnotesize{}CIFAR10}}\tabularnewline
\hline 
{\footnotesize{}Labels} & {\footnotesize{}10} & {\footnotesize{}20} & {\footnotesize{}40}\tabularnewline
\hline 
\hline 
{\footnotesize{}MixMatch \cite{berthelot2019mixmatch}} & {\footnotesize{}-} & {\footnotesize{}-} & {\footnotesize{}47.54$\pm$11.50}\tabularnewline
\hline 
{\footnotesize{}UDA \cite{xie2019unsupervised}} & {\footnotesize{}-} & {\footnotesize{}-} & {\footnotesize{}29.05$\pm$5.93}\tabularnewline
\hline 
{\footnotesize{}ReMixMatch \cite{berthelot2019remixmatch}} & {\footnotesize{}-} & {\footnotesize{}-} & {\footnotesize{}19.10$\pm$9.64}\tabularnewline
\hline 
\hline 
{\footnotesize{}ReMixMatch$^{\dagger}$}\tablefootnote{{\footnotesize{}https://github.com/google-research/remixmatch}} & {\footnotesize{}59.86$\pm$9.34} & {\footnotesize{}41.68$\pm$8.15} & {\footnotesize{}28.31$\pm$6.72}\tabularnewline
\hline 
\hline 
{\footnotesize{}CRLC-semi} & \textbf{\footnotesize{}46.75$\pm$8.01} & \textbf{\footnotesize{}29.81$\pm$1.18} & \textbf{\footnotesize{}19.87$\pm$0.82}\tabularnewline
\hline 
\end{tabular}{\footnotesize\par}
\par\end{centering}
\caption{Classification errors on CIFAR10. Lower values are better. Results
of baselines are taken from \cite{sohn2020fixmatch}. $^{\dagger}$:
Results obtained from external implementations of models.\label{tab:Semi-supervised-learning-results}}
\end{table}

In Table~\ref{tab:Two-stage-results-ImageNet}, we show the clustering
results of ``two-stage'' CRLC on ImageNet50/100/200. Results on
CIFAR10/20 and STL10 are provided in Appdx.~A.9. For fair comparison
with SCAN, we use the same settings as in \cite{van2020scan} (details
in Appdx.~A.7). It is clear that ``two-stage'' CRLC outperforms
SCAN on all datasets. A possible reason is that besides pushing neighboring
samples close together, our proposed probability contrastive loss
also pulls away samples that are not neighbors (in the negative pairs)
while the SCAN's loss does not. Thus, by experiencing more pairs of
samples, our model is likely to form better clusters.

\subsection{Semi-supervised Learning\label{subsec:Semi-supervised-Learning-Experiments}}

Given the good performance of CRLC on clustering, it is natural to
ask whether this model also performs well on semi-supervised learning
(SSL) or not. To adapt for this new task, we simply train CRLC with
the new objective $\Loss_{\text{CLRC-semi}}$ (Eq.~\ref{eq:Loss_CRLC_semi}).
The model architecture and training setups remain almost the same
(changes in Appdx.~A.13).

\begin{figure*}
\begin{centering}
\includegraphics[height=0.2\textwidth]{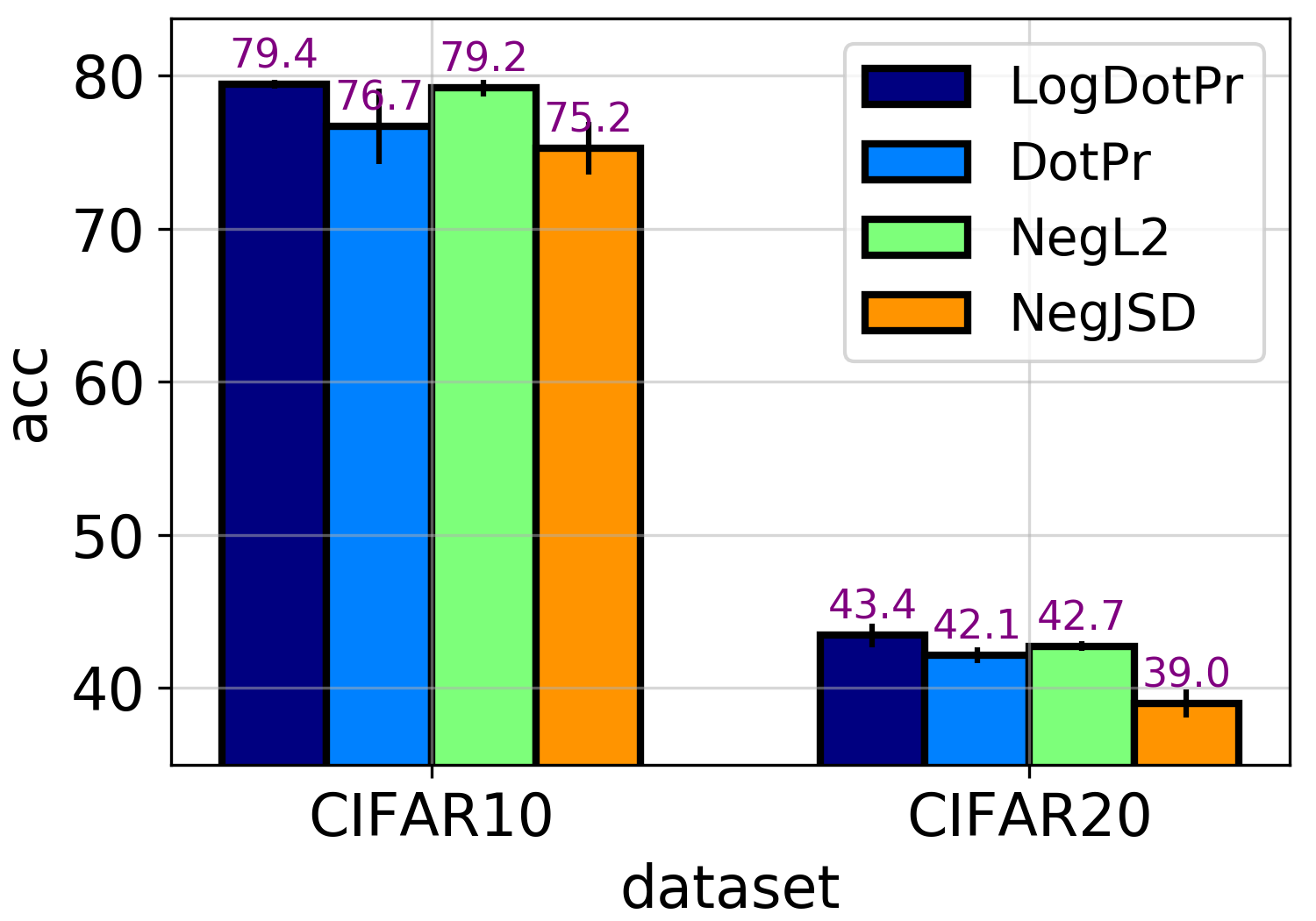}\hspace*{0.01\textwidth}\includegraphics[height=0.2\textwidth]{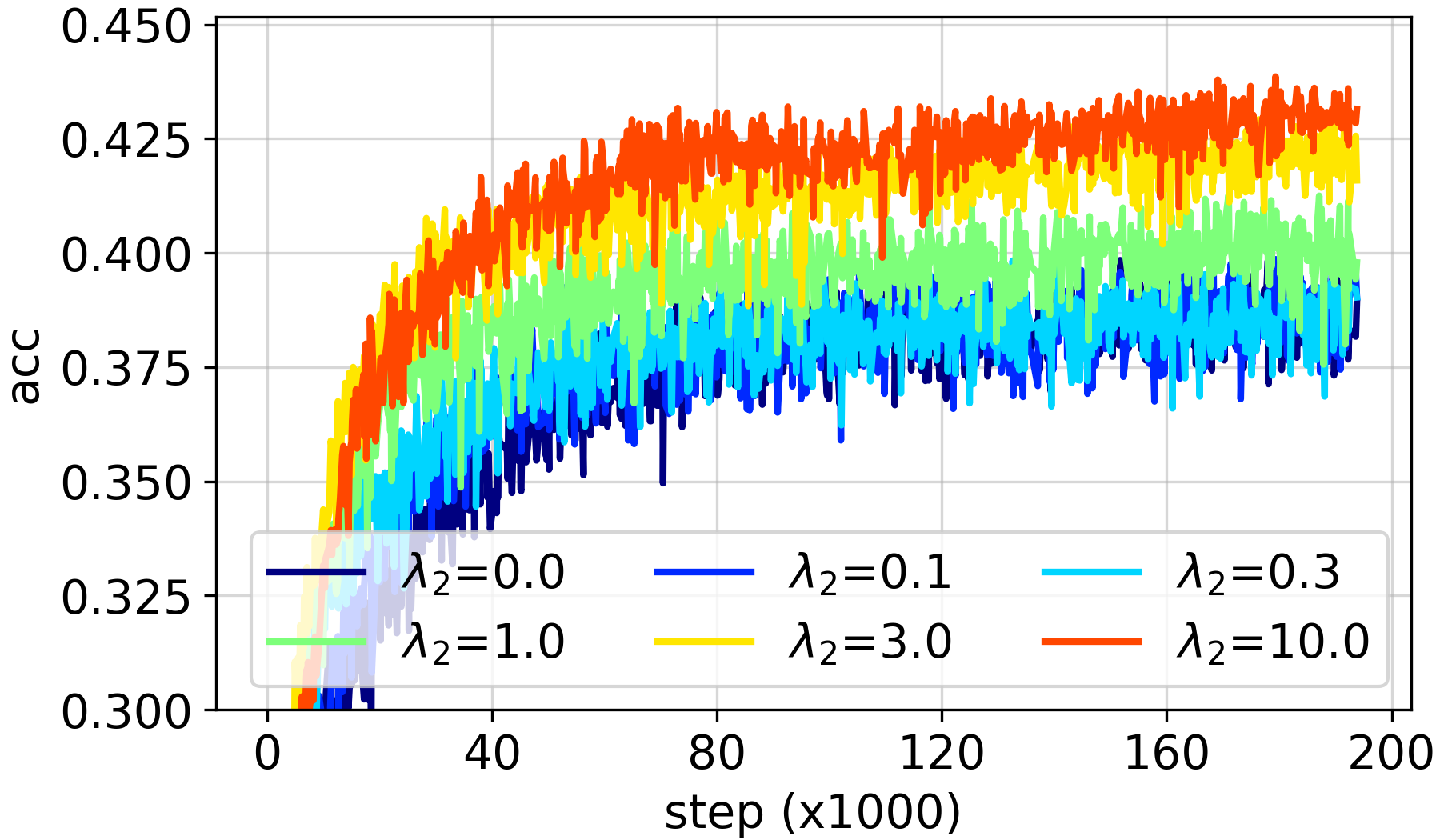}\hspace*{0.01\textwidth}\includegraphics[height=0.2\textwidth]{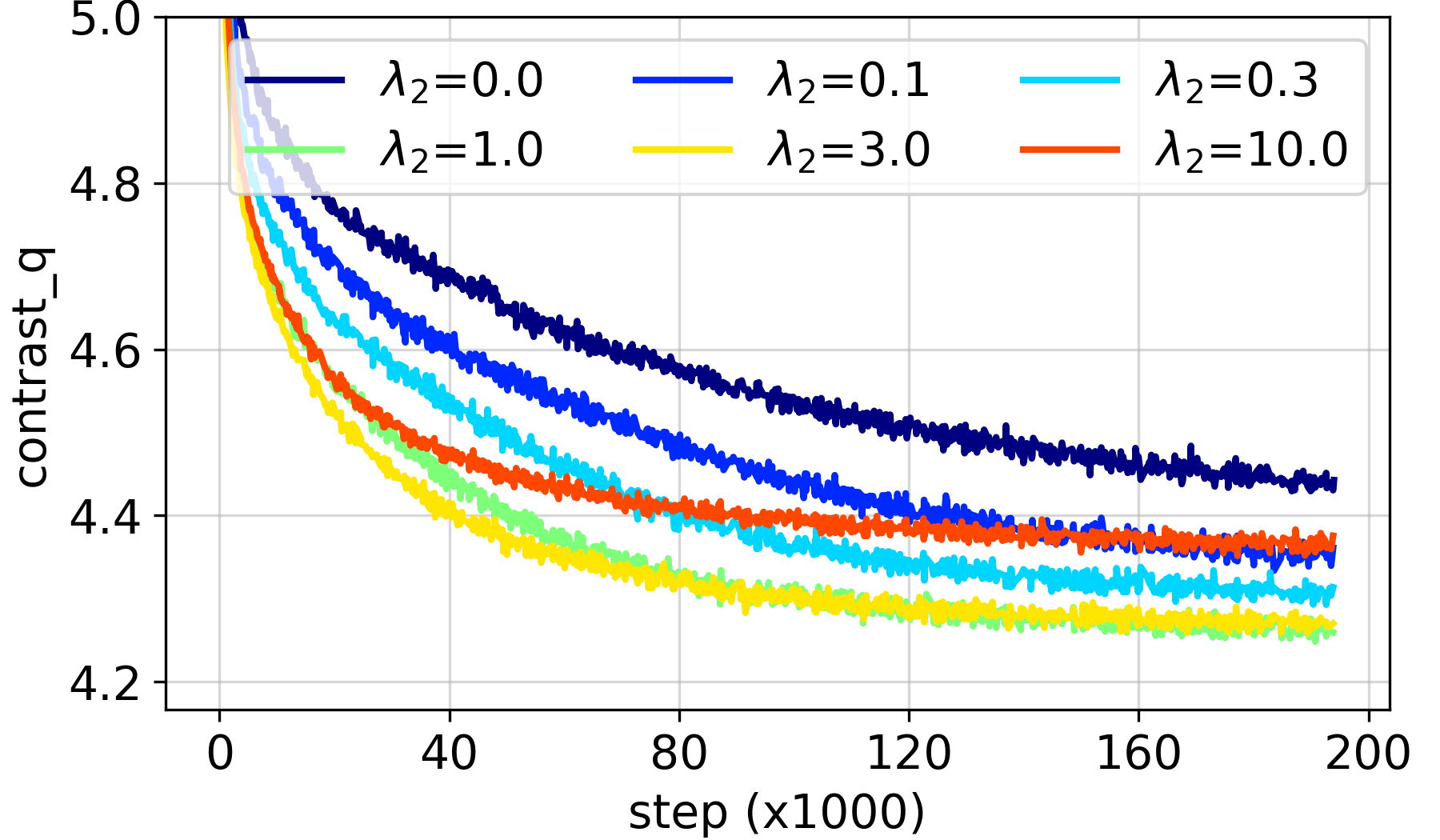}
\par\end{centering}
\caption{\textbf{Left}: Clustering accuracies of CRLC w.r.t. different critics
on CIFAR10/20 (training set only). \textbf{Middle}, \textbf{Right}:
Accuracy and $\protect\Loss_{\text{PC}}$ curves of CRLC on CIFAR20
w.r.t. different coefficients of $\protect\Loss_{\text{FC}}$ ($\lambda_{2}$
in Eq.~\ref{eq:Loss_CRLC}).\label{fig:bar_n_2curves}}
\end{figure*}

\begin{figure*}
\begin{centering}
\includegraphics[height=0.21\textwidth]{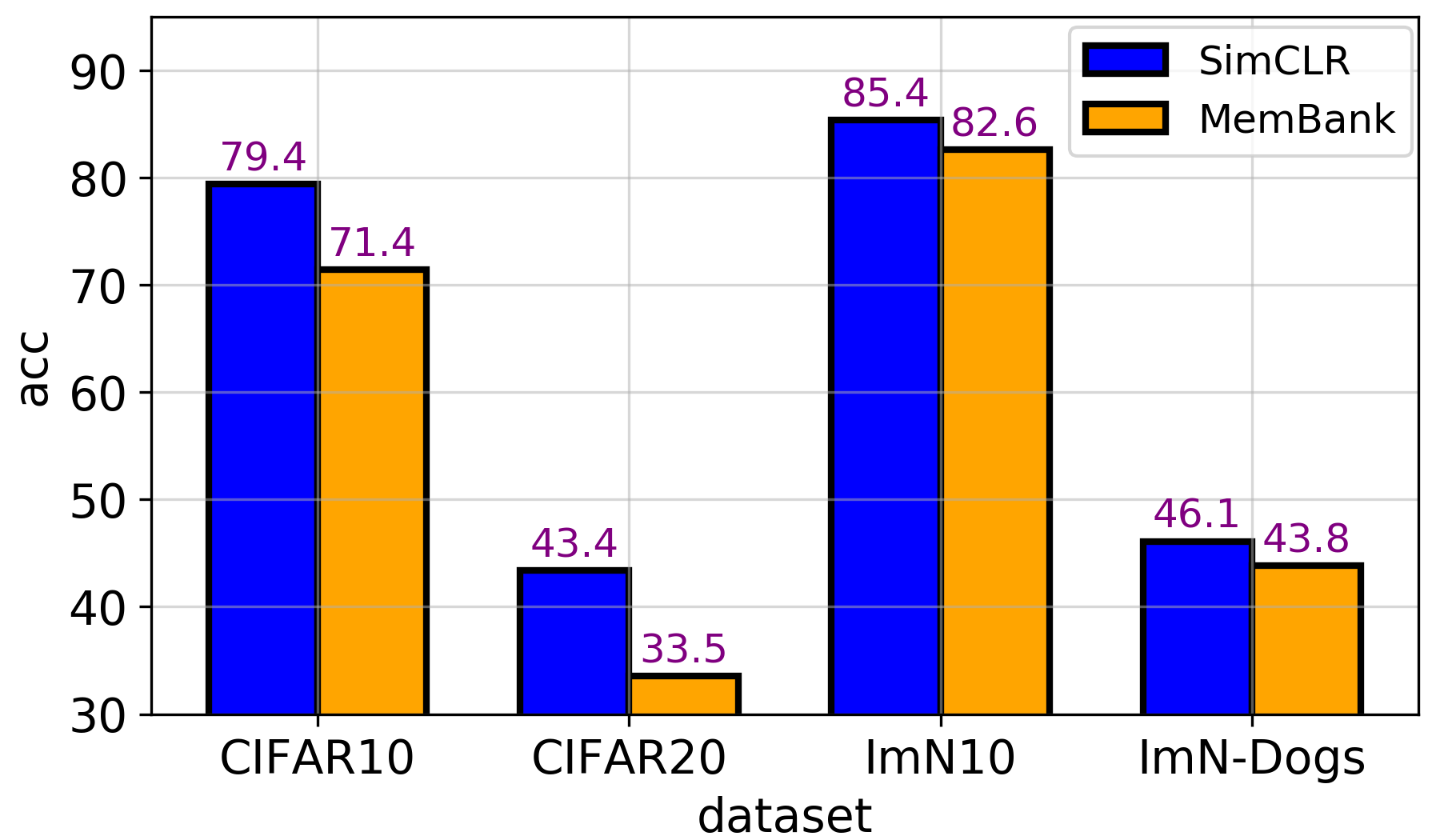}\hspace*{0.01\textwidth}\includegraphics[height=0.21\textwidth]{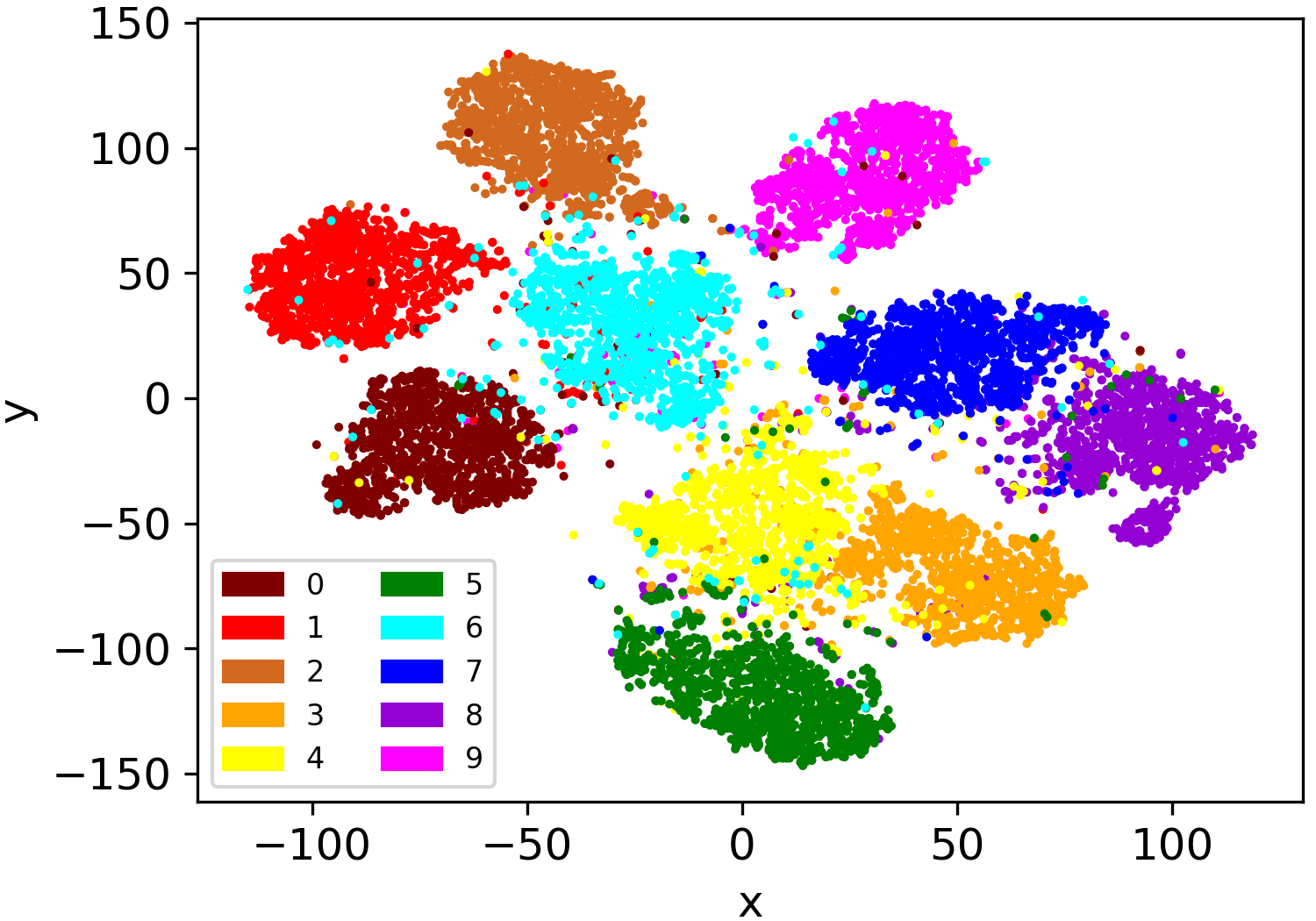}\hspace*{0.01\textwidth}\includegraphics[height=0.21\textwidth]{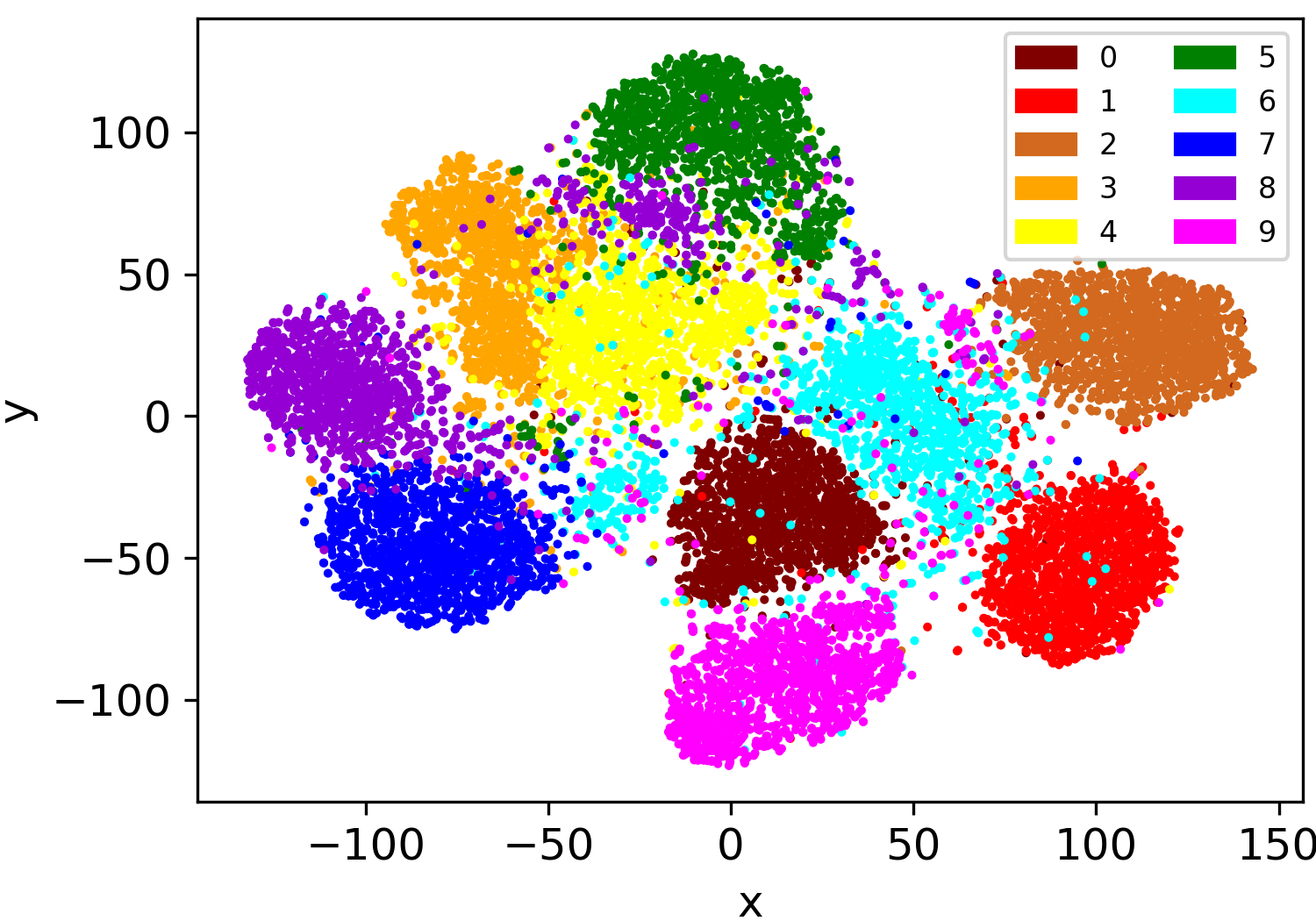}
\par\end{centering}
\caption{\textbf{Left}: Clustering accuracies of CRLC w.r.t. SimCLR \cite{chen2020simple}
and MemoryBank \cite{wu2018unsupervised} implementations. For CIFAR10/20,
only the training set is used. \textbf{Middle}, \textbf{Right}: tSNE
visualizations of the feature vectors learned by CRLC and SimCLR on
the ImageNet10 train set, repectively.\label{fig:bar_n_2embeds}}
\end{figure*}

From Table \ref{tab:Semi-supervised-learning-results}, we see that
CRLC-semi, though is not designed especially for SSL, significantly
outperforms many state-of-the-art SSL methods (brief discussion in
Appdx.~A.12). For example, CRLC-semi achieves about 30\% and 10\%
lower error than MixMatch \cite{berthelot2019mixmatch} and UDA \cite{xie2019unsupervised}
respectively on CIFAR10 with 4 labeled samples per class. Interestingly,
the power of CRLC-semi becomes obvious when the number of labeled
data is pushed to the limit. While most baselines cannot work with
1 or 2 labeled samples per class, CRLC-semi still performs \emph{consistently
well} with \emph{very low standard deviations}. We hypothesize the
reason is that CRLC-semi, via minimizing $\Loss_{\text{FC}}$, models
the ``smoothness'' of data better than the SSL baselines. For more
results on SSL, please check Appdx.~A.14.

\subsection{Ablation Study}

\paragraph{Comparison of different critics in the probability contrastive loss\label{par:Comparison-of-different-critics}}

In Fig.~\ref{fig:bar_n_2curves} left, we show the performance of
CRLC on CIFAR10 and CIFAR20 w.r.t. different critic functions. Apparently,
the theoretically sound ``log-of-dot-product'' critic (Eq.~\ref{eq:log_dot_prod_critic})
gives the best results. The ``negative-L2-distance'' critic is slightly
worse than the ``log-of-dot-product'' critic while the ``dot-product''
and the ``negative-JS-divergence'' critics are the worst.

\paragraph{Contribution of the feature contrastive loss\label{par:Contribution-of-the_LossFC}}

We investigate by how much our model's performance will be affected
if we change the coefficient of $\Loss_{\text{FC}}$ ($\lambda_{2}$
in Eq.~\ref{eq:Loss_CRLC}) to different values. Results on CIFAR20
are shown in Fig.~\ref{fig:bar_n_2curves} middle, right. Interestingly,
minimizing both $\Loss_{\text{PC}}$ and $\Loss_{\text{FC}}$ simultaneously
results in lower values of $\Loss_{\text{PC}}$ than minimizing only
$\Loss_{\text{PC}}$ ($\lambda_{2}=0$). It implies that $\Loss_{\text{FC}}$
provides the model with more information to form better clusters.
In order to achieve good clustering results, $\lambda_{2}$ should
be large enough relative to the coefficient of $\Loss_{\text{PC}}$
which is 1. However, too large $\lambda_{2}$ results in a high value
of $\Loss_{\text{PC}}$, which may hurts the model's performance.
For most datasets including CIFAR20, the optimal value of $\lambda_{2}$
is 10.

\paragraph{Nonparametric implementation of CRLC}

Besides using SimCLR \cite{chen2020simple}, we can also implement
the two contrastive losses in CRLC using MemoryBank \cite{wu2018unsupervised}
(Section~\ref{par:Implementing-LossPC}). This reduces the memory
storage by about 30\% and the training time by half (on CIFAR10 with
ResNet34 as the backbone and the minibatch size of 512). However,
MemoryBank-based CRLC usually takes longer time to converge and is
poorer than the SimCRL-based counterpart as shown in Fig.~\ref{fig:bar_n_2embeds}
left. The contributions of the number of negative samples and the
momentum coefficient to the performance of MemoryBank-based CRLC are
analyzed in Appdx.~A.10.2.

\paragraph{Mainfold visualization}

We visualize the manifold of the continous features learned by CRLC
in Fig.~\ref{fig:bar_n_2embeds} middle. We observe that CRLC usually
groups features into well-separate clusters. This is because the information
captured by the C-head has affected the RL-head. However, if the RL-head
is learned independently (e.g., in SimCLR), the clusters also emerge
but are usually close together (Fig.~\ref{fig:bar_n_2embeds} right).
Through both cases, we see the importance of contrastive representation
learning for clustering.

\section{Conclusion}

We proposed a novel clustering method named CRLC that exploits both
the fine-grained instance-level information and the coarse-grained
cluster-level information from data via a unified sample-oriented
contrastive learning framework. CRLC showed promising results not
only in clustering but also in semi-supervised learning. In the future,
we plan to enhance CRLC so that it can handle neighboring samples
in a principled way rather than just views. We also want to extend
CRLC to other domains (e.g., videos, graphs) and problems (e.g., object
detection).

\bibliographystyle{ieee_fullname}
\bibliography{ClusterInfoMax}

\cleardoublepage{}

\appendix

\section{Appendix}

\subsection{Possible critics for the probability contrastive loss\label{subsec:Possible-critics}}

We list here several possible critics that could be used in $\Loss_{\text{PC}}$.
If we simply consider a critic $f$ as a similarity measure of two
probabilities $p$ and $q$, $f$ could be the \emph{negative Jensen
Shannon (JS) divergence}\footnote{The JS divergence is chosen due to its symmetry. The negative sign
reflects the fact that $f$ is a similarity measure instead of a divergence.} between $p$ and $q$:
\begin{align}
f(p,q) & =-D_{\text{JS}}(p\|q)\label{eq:JS_critic1}\\
 & =-\frac{1}{2}\left(D_{\text{KL}}\left(p\big\|\frac{p+q}{2}\right)+D_{\text{KL}}\left(q\big\|\frac{p+q}{2}\right)\right)\label{eq:JS_critic2}
\end{align}
or the \emph{negative L2 distance} between $p$ and $q$:
\begin{equation}
f(p,q)=-\left\Vert p-q\right\Vert _{2}^{2}=-\sum_{c=1}^{C}{(p_{c}-q_{c})}^{2}\label{eq:L2_critic}
\end{equation}
In both cases, $f$ achieves its maximum value when $p=q$ and its
minimum value when $p$ and $q$ are different one-hot vectors.

We can also define $f$ as the \emph{dot product} of $p$ and $q$
as follows:
\begin{equation}
f(p,q)=p^{\top}q=\sum_{c=1}^{C}p_{c}q_{c}\label{eq:dot_prod_critic}
\end{equation}
However, the maximum value of this critic is no longer obtained when
$p=q$ but when $p$ and $q$ are \emph{the same one-hot vector} (check
Appdx.~\ref{subsec:Global-maxima-and-minima-dot-product-critic}
for details). It means that maximizing this critic encourages not
only the consistency between $p$ and $q$ but also the confidence
of $p$ and $q$.

\subsection{Global maxima and minima of the dot product critic for probabilities\label{subsec:Global-maxima-and-minima-dot-product-critic}}

\begin{figure*}
\begin{centering}
\subfloat[Dot-product critic\label{fig:Dot-product-critic-surface}]{\begin{centering}
\includegraphics[width=0.25\textwidth]{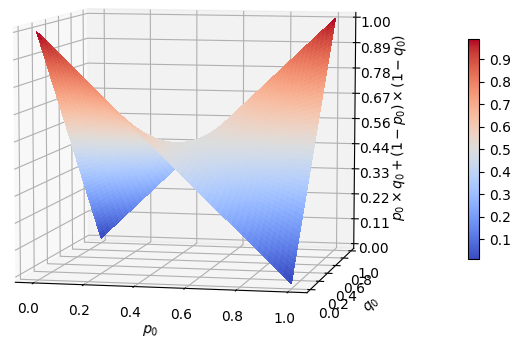}
\par\end{centering}
}\subfloat[Log-of-dot-product critic\label{fig:Log-of-dot-product-critic-surface}]{\begin{centering}
\includegraphics[width=0.25\textwidth]{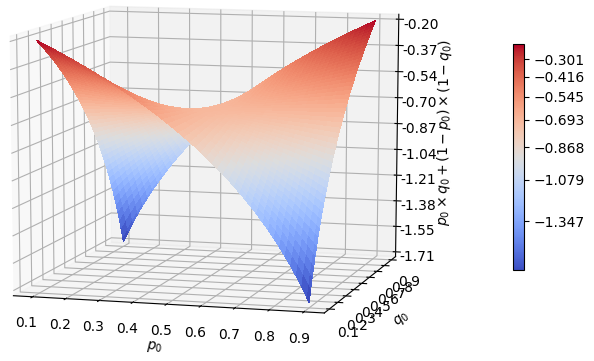}
\par\end{centering}
}\subfloat[Negative-L2-distance critic\label{fig:Neg-L2-dist-critic-surface}]{\begin{centering}
\includegraphics[width=0.25\textwidth]{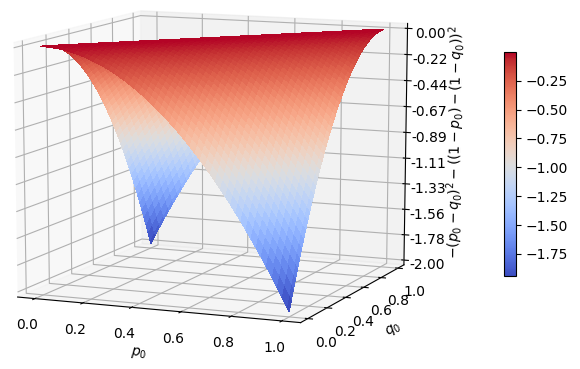}
\par\end{centering}
}\subfloat[Negative-JS-divergence critic]{\begin{centering}
\includegraphics[width=0.25\textwidth]{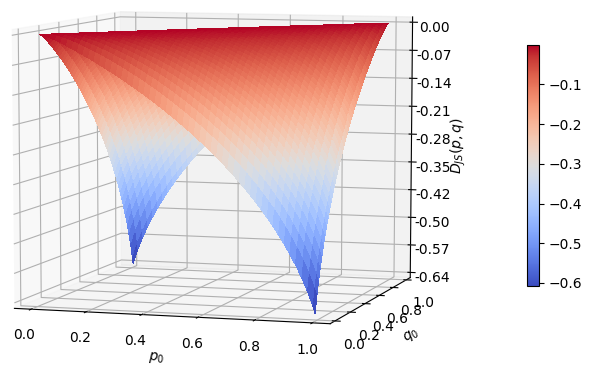}
\par\end{centering}
}
\par\end{centering}
\caption{The surfaces of different critics on probabilities in case of 2 classes.
\label{fig:critic_2D_surface}}
\end{figure*}

\begin{prop}
\textup{The dot product critic $f(p,q)=\sum_{c=1}^{C}p_{c}q_{c}$
achieves its global maximum value at $1$ when $p_{c}$ and $q_{c}$
are }\textup{\emph{the same one-hot vector}}\textup{, and its global
minimum value at $0$ when $p_{c}$ and $q_{c}$ are }\textup{\emph{different
one-hot vector}}\textup{.}
\end{prop}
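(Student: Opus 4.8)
The plan is to treat the dot product $f(p,q)=\sum_{c=1}^{C}p_cq_c$ as a bilinear function on the product of two probability simplices $\Delta^{C-1}\times\Delta^{C-1}$, a compact convex set, so that extrema are attained and — by linearity in each argument with the other held fixed — are attained at vertices of the simplex, i.e.\ at one-hot vectors. First I would observe that for fixed $q$, the map $p\mapsto p^{\top}q$ is linear, hence its maximum (resp.\ minimum) over $\Delta^{C-1}$ is attained at an extreme point, namely a one-hot vector $e_{c^\star}$ with $c^\star\in\argmax{c} q_c$ (resp.\ $\argmin{c}q_c$); symmetrically for fixed $p$. So a joint maximizer may be sought among pairs of one-hot vectors $(e_a,e_b)$, where $f(e_a,e_b)=\delta_{ab}$, which equals $1$ iff $a=b$ and $0$ otherwise.

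The upper bound $f(p,q)\le 1$ with equality characterization then follows cleanly: since $0\le p_c\le 1$ for all $c$, we have $\sum_c p_cq_c\le\sum_c q_c=1$, with equality forcing $p_c=1$ whenever $q_c>0$; combined with $\sum_c p_c=1$ this forces both $p$ and $q$ to be the \emph{same} one-hot vector. For the lower bound, $f(p,q)=\sum_c p_cq_c\ge 0$ trivially since all terms are nonnegative, and $f(p,q)=0$ iff $p_cq_c=0$ for every $c$, i.e.\ the supports of $p$ and $q$ are disjoint; on the simplex this is possible precisely when $p$ and $q$ are one-hot vectors with different support (for $C\ge 2$), which is one instance of "different one-hot vectors." I would state it this way because it simultaneously gives the attained values $0$ and $1$ and the geometric description the proposition asserts.

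The one subtlety — and the main thing to get right rather than the "main obstacle" — is that the minimum value $0$ is attained on a larger face than just pairs of distinct one-hot vectors (any pair of disjoint-support distributions works, e.g.\ $p=(1,0)$, $q=(0,1)$ trivially, but also $p=(\tfrac12,\tfrac12,0,0)$, $q=(0,0,\tfrac12,\tfrac12)$ when $C\ge4$), whereas the maximum $1$ is attained \emph{only} at coincident one-hot vectors. So the proposition's phrasing "when $p_c$ and $q_c$ are different one-hot vectors" should be read as exhibiting a representative minimizer, not a full characterization; I would include a one-line remark noting that the maximizer is unique up to the choice of coordinate while the minimizer is not, to preempt confusion. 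The write-up is then three or four short lines: attainment by compactness, the two chains of inequalities $0\le p^\top q\le 1$, and the equality analyses — no nontrivial computation is needed.
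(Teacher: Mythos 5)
Your proposal is correct and takes essentially the same route as the paper: the two inequality chains $0\le\sum_{c}p_{c}q_{c}\le\sum_{c}q_{c}=1$ with equality analysis, the extreme-point/bilinearity preamble being harmless but not needed. Your side remark that the minimum value $0$ is attained by \emph{any} pair of distributions with disjoint supports (e.g.\ $p=(\tfrac{1}{2},\tfrac{1}{2},0,0)$, $q=(0,0,\tfrac{1}{2},\tfrac{1}{2})$), not only by distinct one-hot vectors, is accurate and in fact corrects a slight overstatement in the paper's own equality analysis for the lower bound, whereas the equality case of the upper bound is characterized identically in both arguments.
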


\begin{proof}
Since $0\leq p_{c},q_{c}\leq1$, we have $\sum_{c=1}^{C}p_{c}q_{c}\geq0$.
This minimum value is achieved when $p_{c}q_{c}=0$ for all $c\in\{1,...,C\}$.
And because $\sum_{c=1}^{C}p_{c}=\sum_{c=1}^{C}q_{c}=1$, $p_{c}$
and $q_{c}$ must be different one-hot vectors.

In addition, we also have $\sum_{c=1}^{C}p_{c}q_{c}\leq\sum_{c=1}^{C}p_{c}=1$.
This maximum value is achieved when $p_{c}q_{c}=p_{c}$ or $p_{c}(q_{c}-1)=0$
for all $c\in\{1,...,C\}$, which means $p_{c}$ and $q_{c}$ must
be the same one-hot vectors.
\end{proof}
Since the gradient of $\sum_{c=1}^{C}p_{c}q_{c}$ w.r.t. $q_{c}$
is proportional to $p_{c}$, if we fix $p$ and only optimize $q$,
maximizing $\sum_{c=1}^{C}p_{c}q_{c}$ via gradient ascent will encourage
$q$ to be \emph{one-hot} at the component $k$ at which $p_{k}$
is the \emph{largest}. Similarly, minimizing $\sum_{c=1}^{C}p_{c}q_{c}$
via gradient descent will encourage $q$ to be \emph{one-hot} at the
component $k$ at which $p_{k}$ is the \emph{smallest}.

In case $p_{1}=...=p_{C}=\frac{1}{C}$, all the components of $q$
have similar gradients. Although it does not change the relative order
between the components of $q$ after update, it still push $q$ towards
the saddle point $\left(\frac{1}{C},...,\frac{1}{C}\right)$. However,
chance that models get stuck at this saddle point is tiny unless we
explicitly force it to happen (e.g., maximizing $H(q)$). 

For better understanding of the optimization dynamics, we visualize
the surface of $\sum_{c=1}^{C}p_{c}q_{c}$ with $C=2$ in Fig.~\ref{fig:Dot-product-critic-surface}.
$\log\left(\sum_{c=1}^{C}p_{c}q_{c}\right)$ has the same global optimal
values and surface as $\sum_{c=1}^{C}p_{c}q_{c}$

\subsection{Derivation of the InfoNCE lower bound\label{subsec:Derivation-of-the-InfoNCE-lowerbound}}

The variational lower bound of $I(X;Y)$ can be computed as follows:
\begin{align}
I(X;Y) & =\Expect_{p(x,y)}\left[\log\frac{p(x,y)}{p(x)p(y)}\right]\nonumber \\
 & =\Expect_{p(x,y)}\left[\log\frac{q_{\theta}(x,y)}{p(x)p(y)}\right]+D_{\text{KL}}\left(p(x,y)\|q_{\theta}(x,y)\right)\nonumber \\
 & \geq\Expect_{p(x,y)}\left[\log\frac{q_{\theta}(x,y)}{p(x)p(y)}\right]\label{eq:BA_lowerbound}
\end{align}
where $q_{\theta}(x,y)$ is the variational approximation of $p(x,y)$.

Following \cite{poole2019variational}, we assume that $q_{\theta}(x,y)$
belongs to the energy-based variational family that uses a critic
$f_{\theta}(x,y)$ and is scaled by the data density $p(x)p(y)$:
\begin{equation}
q_{\theta}(x,y)=\frac{p(x)p(y)e^{f_{\theta}(x,y)}}{\sum_{x,y}p(x)p(y)e^{f_{\theta}(x,y)}}=\frac{p(x)p(y)e^{f_{\theta}(x,y)}}{Z_{\theta}}\label{eq:q(x,y)}
\end{equation}
where $Z_{\theta}=\sum_{x,y}p(x)p(y)e^{f_{\theta}(x,y)}=\Expect_{p(x)p(y)}\left[e^{f_{\theta}(x,y)}\right]$
is the partition function which does not depend on $x$, $y$.

Since the optimal value of $q_{\theta}(x,y)$ is $q_{\theta}^{*}(x,y)=p(x,y)$,
we have:
\begin{align}
 & \frac{p(x)p(y)e^{f_{\theta}^{*}(x,y)}}{Z_{\theta}^{*}}=p(x,y)\nonumber \\
\Leftrightarrow & f_{\theta}^{*}(x,y)=\log Z_{\theta}^{*}+\log\frac{p(x,y)}{p(x)p(y)},\label{eq:optimal_critic}
\end{align}
which means the optimal value of $f_{\theta}(x,y)$ is proportional
to $\log\frac{p(x,y)}{p(x)p(y)}$.

Next, we will show that $f_{\theta}$ is the critic in the InfoNCE
lower bound. We start by rewriting the lower bound in Eq.~\ref{eq:BA_lowerbound}
using the formula of $q_{\theta}(x)$ in Eq.~\ref{eq:q(x,y)} as
follows:
\begin{align}
I(X;Y) & \geq\Expect_{p(x,y)}\left[\log\frac{e^{f_{\theta}(x,y)}}{Z_{\theta}}\right]\nonumber \\
 & =\Expect_{p(x,y)}\left[f_{\theta}(x,y)\right]-\log Z_{\theta}\label{eq:UBA_lowerbound}
\end{align}
Here, we encounter the intractable $\log Z_{\theta}$. To form a tractable
lower bound of $I(X;Y)$, we continue replacing $\log Z_{\theta}$
with its variational upper bound: 
\begin{equation}
\log Z_{\theta}\leq\frac{Z_{\theta}}{a_{\theta}}+\log a_{\theta}-1\label{eq:logZ_upperbound}
\end{equation}
where $a_{\theta}$ is the variational approximation of $Z_{\theta}$.
We should choose $a_{\theta}$ close to $Z_{\theta}$ so that the
variance of the bound in Eq.~\ref{eq:logZ_upperbound} is small.
Recalling that $Z_{\theta}=\Expect_{p(x)p(y)}\left[e^{f_{\theta}(x,y)}\right]$,
we define $a_{\theta}$ as follows:
\begin{equation}
a_{\theta}=\frac{1}{M}\sum_{i=1}^{M}e^{f_{\theta}(x_{i},y)}\label{eq:a_formula}
\end{equation}
where $x_{1},...,x_{M}$ are $M$ samples from $p(x)$. $a_{\theta}$
in Eq.~\ref{eq:a_formula} can be seen as a stochastic estimation
of $Z_{\theta}$ with $x$ sampled $M$ times more than $y$. Thus,
$\frac{Z_{\theta}}{a_{\theta}}\approx1$ and from Eq.~\ref{eq:logZ_upperbound},
we have $\log Z_{\theta}\leq\log a_{\theta}$. Apply this result to
Eq.~\ref{eq:UBA_lowerbound}, we have:
\begin{align}
I(X;Y) & \geq\Expect_{p(x,y)}\left[f_{\theta}(x,y)\right]-\log a_{\theta}\label{eq:InfoNCE_bound1}\\
 & =\Expect_{p(x_{2:M})}\Expect_{p(x_{1},y)}\left[f_{\theta}(x_{1},y)-\log\frac{1}{M}\sum_{i=1}^{M}e^{f_{\theta}(x_{i},y)}\right]\label{eq:InfoNCE_bound2}\\
 & =\Expect_{p(x_{1:M})p(y|x_{1})}\left[\log\frac{e^{f_{\theta}(x_{1},y)}}{\sum_{i=1}^{M}e^{f_{\theta}(x_{i},y)}}\right]+\log M\label{eq:InfoNCE_bound3}\\
 & \triangleq I_{\text{InfoNCE}}(X;Y)\label{eq:InfoNCE_bound4}
\end{align}
where Eq.~\ref{eq:InfoNCE_bound2} is obtained from Eq.~\ref{eq:InfoNCE_bound1}
by using the fact that $\Expect_{p(x,y)}\left[f_{\theta}(x,y)\right]=\Expect_{p(x_{2:M})}\Expect_{p(x_{1},y)}\left[f_{\theta}(x_{1},y)\right]$
and the assumption that the samples $x_{1},...,x_{M}$ and $y$ in
$a_{\theta}$ (Eq.~\ref{eq:a_formula}) are drawn from $p(x_{2:M})p(x_{1},y)$.

Combining with the result in Eq.~\ref{eq:optimal_critic}, we have
the optimal critic $f_{\theta}^{*}(x,y)$ in the InfoNCE lower bound
is proportional to $\log\frac{p(x,y)}{p(x)p(y)}=\log\frac{p(y|x)}{p(y)}$.
Since $p(y)$ does not depend on $x$ and will be cancelled by both
the nominator and denominator in Eq.~\ref{eq:InfoNCE_bound3}, $f_{\theta}^{*}(x,y)$
is, in fact, proportional to $\log p(y|x)$.

\subsection{Derivation of the scaled dot product critic in representation learning\label{subsec:Derivation-of-the-scale-dot-product-critic}}

Recalling that in contrastive representation learning, the critic
$f$ is defined as the scaled dot product between two unit-normed
feature vectors $\tilde{z}$, $z_{i}$:
\[
f(\tilde{x},x_{i})=\tilde{z}^{\top}z_{i}/\tau
\]
Interestingly, this formula of $f$ is accordant with the formula
of $f^{*}$ and is proportional to $\log p(\tilde{x}|x_{i})$. To
see why, let's assume that the distribution of $\tilde{z}$ given
$z_{i}$ is modeled by an isotropic Gaussian distribution with $z_{i}$
as the mean vector and $\tau I$ as the covariance matrix. Then, we
have:
\begin{align*}
f^{*} & \propto\log p(\tilde{x}|x_{i})\\
 & \approx\log p(\tilde{z}|z_{i})\\
 & \propto\log e^{-\frac{0.5}{\tau}\left\Vert \tilde{z}-z_{i}\right\Vert _{2}^{2}}\\
 & =-\frac{0.5}{\tau}\left(\left\Vert \tilde{z}\right\Vert _{2}^{2}-2\tilde{z}^{\top}z_{i}+\left\Vert z_{i}\right\Vert _{2}^{2}\right)\\
 & =\tilde{z}^{\top}z_{i}/\tau-1/\tau\\
 & \propto\tilde{z}^{\top}z_{i}/\tau
\end{align*}
where $\left\Vert \tilde{z}\right\Vert _{2}^{2}=\left\Vert z_{i}\right\Vert _{2}^{2}=1$
due to the fact that $\tilde{z}$ and $z_{i}$ are unit-normed vectors.

\subsection{Analysis of the gradient of $\protect\Loss_{\text{PC}}$\label{subsec:Analysis-of-the-gradient}}

Recalling that the probability contrastive loss $\Loss_{\text{PC}}$
for a sample $\tilde{x}$ with the ``log-of-dot-product'' critic
$f(p,q)=\log\left(p^{\top}q\right)$ is computed as follows:
\begin{align*}
\Loss_{\text{PC}} & =-\log\frac{e^{f(\tilde{q},q_{1})}}{\sum_{i=1}^{M}e^{f(\tilde{q},q_{i})}}\\
 & =-\log\left(\tilde{q}^{\top}q_{1}\right)+\log\sum_{i=1}^{M}\tilde{q}^{\top}q_{i}
\end{align*}
Because $\tilde{q}$ is always parametric while $q_{i}$ ($i\in\{1,...,M\}$)
can be either parametric (if $\Loss_{\text{PC}}$ is implemented via
the SimCLR framework \cite{chen2020simple}) or non-parametric (if
$\Loss_{\text{PC}}$ is implemented via the MemoryBank framework \cite{wu2018unsupervised}),
we focus on the gradient of $\Loss_{\text{PC}}$ back-propagating
through $\tilde{q}$. In practice, $\tilde{q}$ is usually implemented
by applying softmax to the logit vector $\tilde{u}\in\Real^{C}$:
\[
\tilde{q}_{c}=\frac{\exp\left(\tilde{u}_{c}\right)}{\sum_{k=1}^{C}\exp\left(\tilde{u}_{k}\right)}
\]
where $\tilde{q}_{c}$ denotes the $c$-th component of $\tilde{q}$.
Similarly, $q_{i,c}$ is the $c$-th component of $q_{i}$.

The gradient of $\Loss_{\text{PC}}$ w.r.t. $\tilde{u}_{c}$ is given
by:

\begin{equation}
\frac{\partial\Loss_{\text{PC}}}{\partial\tilde{u}_{c}}=-\frac{\partial}{\partial\tilde{u}_{c}}\log\left(\tilde{q}^{\top}q_{1}\right)+\frac{\partial}{\partial\tilde{u}_{c}}\log\sum_{i=1}^{M}\tilde{q}^{\top}q_{i}\label{eq:Grad_LossPC_1}
\end{equation}
The first term in Eq.~\ref{eq:Grad_LossPC_1} is equivalent to:
\begin{align}
 & -\frac{\partial}{\partial\tilde{u}_{c}}\log\left(\tilde{q}^{\top}q_{1}\right)\nonumber \\
\Leftrightarrow & \frac{-1}{\tilde{q}^{\top}q_{1}}\left(\frac{\partial}{\partial\tilde{u}_{c}}\left(\tilde{q}_{c}q_{1,c}\right)+\sum_{k\neq c}\frac{\partial}{\partial\tilde{u}_{c}}\left(\tilde{q}_{k}q_{1,k}\right)\right)\nonumber \\
\Leftrightarrow & \frac{-1}{\tilde{q}^{\top}q_{1}}\left(\tilde{q}_{c}(1-\tilde{q}_{c})q_{1,c}-\sum_{k\neq c}\tilde{q}_{c}\tilde{q}_{k}q_{1,k}\right)\nonumber \\
\Leftrightarrow & -\frac{1}{\sum_{k=1}^{C}\tilde{q}_{k}q_{1,k}}\left(\tilde{q}_{c}q_{1,c}-\tilde{q}_{c}\sum_{k=1}^{C}\tilde{q}_{k}q_{1,k}\right)\nonumber \\
\Leftrightarrow & \tilde{q}_{c}-\frac{\tilde{q}_{c}q_{1,c}}{\sum_{k=1}^{C}\tilde{q}_{k}q_{1,k}}\label{eq:neg_critic_grad}
\end{align}
And the second term in Eq.~\ref{eq:Grad_LossPC_1} is equivalent
to:
\begin{align*}
 & \frac{\partial}{\partial\tilde{u}_{c}}\log\sum_{i=1}^{M}\tilde{q}^{\top}q_{i}\\
\Leftrightarrow & \frac{1}{\sum_{i=1}^{M}\tilde{q}^{\top}q_{i}}\left(\sum_{i=1}^{M}\frac{\partial}{\partial\tilde{u}_{c}}\left(\tilde{q}^{\top}q_{i}\right)\right)\\
\Leftrightarrow & \frac{1}{\sum_{i=1}^{M}\tilde{q}^{\top}q_{i}}\left(\sum_{i=1}^{M}\left(\tilde{q}_{c}q_{i,c}-\tilde{q}_{c}\sum_{k=1}^{C}\tilde{q}_{k}q_{i,k}\right)\right)\\
\Leftrightarrow & \frac{1}{\sum_{i=1}^{M}\sum_{k=1}^{C}\tilde{q}_{k}q_{i,k}}\left(\sum_{i=1}^{M}\tilde{q}_{c}q_{i,c}-\tilde{q}_{c}\sum_{i=1}^{M}\sum_{k=1}^{C}\tilde{q}_{k}q_{i,k}\right)\\
\Leftrightarrow & \frac{\sum_{i=1}^{M}\tilde{q}_{c}q_{i,c}}{\sum_{i=1}^{M}\sum_{k=1}^{C}\tilde{q}_{k}q_{i,k}}-\tilde{q}_{c}
\end{align*}
Thus, we have:
\begin{equation}
\frac{\partial\Loss_{\text{PC}}}{\partial\tilde{u}_{c}}=\frac{\sum_{i=1}^{M}\tilde{q}_{c}q_{i,c}}{\sum_{i=1}^{M}\sum_{k=1}^{C}\tilde{q}_{k}q_{i,k}}-\frac{\tilde{q}_{c}q_{1,c}}{\sum_{k=1}^{C}\tilde{q}_{k}q_{1,k}}\label{eq:Grad_LossPC_final}
\end{equation}

We care about the second term in Eq.~\ref{eq:Grad_LossPC_final}
which is derived from the gradient of the critic $f(\tilde{q},q_{1})$
w.r.t. $\tilde{u}_{c}$ (the negative of the term in Eq.~\ref{eq:neg_critic_grad}).
We rewrite this gradient with simplified notations as follows:
\[
\frac{\partial f(q,p)}{\partial u_{c}}=\frac{q_{c}p_{c}}{\sum_{k=1}^{C}q_{k}p_{k}}-q_{c}
\]
where $u_{c}$ is the $c$-th logit of $q$. Since during training,
$q$ is encouraged to be one-hot (see Appdx.~\ref{subsec:Global-maxima-and-minima-dot-product-critic}),
the denominator may not be defined if we do not prevent $p$ from
being a different one-hot vector. However, even when the denominator
is defined, the update still does not happen as expected when $q$
is one-hot. To see why, let's consider a simple scenario in which
$q=[0,1,0]$ and $p=[0.998,0.001,0.001]$. Apparently, the denominator
is 0.001 $\neq$ 0. By maximizing $f(q,p)$, we want to push $q$
toward $p$. Thus, we expect that $\frac{\partial f}{\partial u_{1}}>0$
and $\frac{\partial f}{\partial u_{2}}<0$. However, the gradients
w.r.t. $u_{c}$ are 0s for all $c\in\{1,2,3\}$:

\begin{align*}
\frac{\partial f}{\partial u_{1}} & =\frac{0\times0.998}{0.001}-0=0\\
\frac{\partial f}{\partial u_{2}} & =\frac{1\times0.001}{0.001}-1=0\\
\frac{\partial f}{\partial u_{3}} & =\frac{0\times0.001}{0.001}-0=0
\end{align*}
The reason is that $q=[0,1,0]$ is a stationary point (minimum in
this case). This means once the model has set $q$ to be one-hot,
it tends to get stuck there and cannot escape \emph{regardless of
the value of} $p$. This problem is known in literature as the ``saturating
gradient'' problem. To alleviate this problem, we propose to smooth
out the values of $q$ and $p$ before computing the critic $f$:
\begin{align*}
q & =(1-\gamma)q+\gamma r\\
p & =(1-\gamma)p+\gamma r
\end{align*}
where $0\leq\gamma\leq1$ is the smoothing coefficient, which is set
to 0.01 if not otherwise specified; $r=\left(\frac{1}{C},...,\frac{1}{C}\right)$
is the uniform probability vector over classes. We also regularize
the value of $u_{c}$ to be within $[-25,25]$.

\begin{table}
\begin{centering}
{\scriptsize{}}%
\begin{tabular}{|c|c|c|c|c|c|}
\hline 
{\scriptsize{}Dataset} & {\scriptsize{}\#Train} & {\scriptsize{}\#Test} & {\scriptsize{}\#Extra} & {\scriptsize{}\#Classes} & {\scriptsize{}Image size}\tabularnewline
\hline 
\hline 
{\scriptsize{}CIFAR10} & {\scriptsize{}50,000} & {\scriptsize{}10,000} & {\scriptsize{}$\times$} & {\scriptsize{}10} & {\scriptsize{}32$\times$32$\times$3}\tabularnewline
\hline 
{\scriptsize{}CIFAR20} & {\scriptsize{}50,000} & {\scriptsize{}10,000} & {\scriptsize{}$\times$} & {\scriptsize{}20} & {\scriptsize{}32$\times$32$\times$3}\tabularnewline
\hline 
{\scriptsize{}STL10} & {\scriptsize{}5,000} & {\scriptsize{}8,000} & {\scriptsize{}100,000} & {\scriptsize{}10} & {\scriptsize{}96$\times$96$\times$3}\tabularnewline
\hline 
{\scriptsize{}ImageNet10} & {\scriptsize{}13,000} & {\scriptsize{}500} & {\scriptsize{}$\times$} & {\scriptsize{}10} & {\scriptsize{}224$\times$224$\times$3}\tabularnewline
\hline 
{\scriptsize{}ImageNet-Dogs} & {\scriptsize{}19,500} & {\scriptsize{}750} & {\scriptsize{}$\times$} & {\scriptsize{}15} & {\scriptsize{}224$\times$224$\times$3}\tabularnewline
\hline 
{\scriptsize{}ImageNet-50} & {\scriptsize{}64,274} & {\scriptsize{}2,500} & {\scriptsize{}$\times$} & {\scriptsize{}50} & {\scriptsize{}224$\times$224$\times$3}\tabularnewline
\hline 
{\scriptsize{}ImageNet-100} & {\scriptsize{}128,545} & {\scriptsize{}5,000} & {\scriptsize{}$\times$} & {\scriptsize{}100} & {\scriptsize{}224$\times$224$\times$3}\tabularnewline
\hline 
{\scriptsize{}ImageNet-200} & {\scriptsize{}256,558} & {\scriptsize{}10,000} & {\scriptsize{}$\times$} & {\scriptsize{}200} & {\scriptsize{}224$\times$224$\times$3}\tabularnewline
\hline 
\end{tabular}{\scriptsize\par}
\par\end{centering}
\caption{Details of the datasets used in this work.\label{tab:Datasets_Details}}
\end{table}

\begin{table*}
\begin{centering}
{\footnotesize{}}%
\begin{tabular}{|c|c|ccc|ccc|ccc|}
\hline 
\multicolumn{2}{|c|}{{\footnotesize{}Dataset}} & \multicolumn{3}{c|}{{\footnotesize{}CIFAR10}} & \multicolumn{3}{c|}{{\footnotesize{}CIFAR20}} & \multicolumn{3}{c|}{{\footnotesize{}STL10}}\tabularnewline
\hline 
\multicolumn{2}{|c|}{{\footnotesize{}Metric}} & {\footnotesize{}ACC} & {\footnotesize{}NMI} & {\footnotesize{}ARI} & {\footnotesize{}ACC} & {\footnotesize{}NMI} & {\footnotesize{}ARI} & {\footnotesize{}ACC} & {\footnotesize{}NMI} & {\footnotesize{}ARI}\tabularnewline
\hline 
\hline 
\multirow{2}{*}{{\footnotesize{}C-head only}} & {\footnotesize{}Train only} & \textbf{\footnotesize{}67.2$\pm$0.7} & {\footnotesize{}56.8$\pm$1.3} & \textbf{\footnotesize{}47.8$\pm$1.4} & \textbf{\footnotesize{}38.0$\pm$1.6} & \textbf{\footnotesize{}36.8$\pm$1.0} & \textbf{\footnotesize{}22.3$\pm$0.9} & {\footnotesize{}47.03$\pm$2.2} & {\footnotesize{}39.06$\pm$1.5} & {\footnotesize{}27.23$\pm$1.8}\tabularnewline
\cline{2-11} \cline{3-11} \cline{4-11} \cline{5-11} \cline{6-11} \cline{7-11} \cline{8-11} \cline{9-11} \cline{10-11} \cline{11-11} 
 & {\footnotesize{}Train + Test} & {\footnotesize{}66.9$\pm$0.8} & \textbf{\footnotesize{}56.9$\pm$0.7} & {\footnotesize{}47.5$\pm$0.5} & {\footnotesize{}37.7$\pm$0.4} & {\footnotesize{}35.7$\pm$0.5} & {\footnotesize{}21.6$\pm$0.3} & \textbf{\footnotesize{}61.2$\pm$1.2} & \textbf{\footnotesize{}52.7$\pm$0.8} & \textbf{\footnotesize{}43.4$\pm$1.3}\tabularnewline
\hline 
\hline 
\multirow{2}{*}{{\footnotesize{}CRLC}} & {\footnotesize{}Train only} & {\footnotesize{}79.4$\pm$0.3} & {\footnotesize{}66.7$\pm$0.6} & {\footnotesize{}62.3$\pm$0.4} & \textbf{\footnotesize{}43.4$\pm$0.8} & \textbf{\footnotesize{}43.1$\pm$0.5} & \textbf{\footnotesize{}27.7$\pm$0.3} & {\footnotesize{}57.6$\pm$1.6} & {\footnotesize{}50.8$\pm$1.5} & {\footnotesize{}41.9$\pm$1.2}\tabularnewline
\cline{2-11} \cline{3-11} \cline{4-11} \cline{5-11} \cline{6-11} \cline{7-11} \cline{8-11} \cline{9-11} \cline{10-11} \cline{11-11} 
 & {\footnotesize{}Train + Test} & \textbf{\footnotesize{}79.9$\pm$0.6} & \textbf{\footnotesize{}67.9$\pm$0.6} & \textbf{\footnotesize{}63.4$\pm$0.4} & {\footnotesize{}42.5$\pm$0.7} & {\footnotesize{}41.6$\pm$0.8} & {\footnotesize{}26.3$\pm$0.5} & \textbf{\footnotesize{}81.8$\pm$0.3} & \textbf{\footnotesize{}72.9$\pm$0.4} & \textbf{\footnotesize{}68.2$\pm$0.3}\tabularnewline
\hline 
\end{tabular}{\footnotesize\par}
\par\end{centering}
\caption{Clustering results of our proposed methods on CIFAR10, CIFAR20 and
STL10 with only the training set used and with both the training and
test sets used.\label{tab:Complete-results-on-CIFAR-n-STL10}}
\end{table*}

\begin{table*}
\begin{centering}
{\footnotesize{}}%
\begin{tabular}{|c|ccc|ccc|}
\hline 
\multicolumn{1}{|c|}{{\footnotesize{}Dataset}} & \multicolumn{3}{c|}{{\footnotesize{}ImageNet10}} & \multicolumn{3}{c|}{{\footnotesize{}ImageNet-Dogs}}\tabularnewline
\hline 
\multicolumn{1}{|c|}{{\footnotesize{}Metric}} & {\footnotesize{}ACC} & {\footnotesize{}NMI} & {\footnotesize{}ARI} & {\footnotesize{}ACC} & {\footnotesize{}NMI} & {\footnotesize{}ARI}\tabularnewline
\hline 
\hline 
\multirow{1}{*}{{\footnotesize{}C-head only}} & {\footnotesize{}80.0$\pm$1.4} & {\footnotesize{}75.2$\pm$1.9} & {\footnotesize{}67.6$\pm$2.2} & {\footnotesize{}36.3$\pm$0.9} & {\footnotesize{}37.5$\pm$0.7} & {\footnotesize{}19.8$\pm$0.4}\tabularnewline
\hline 
\multirow{1}{*}{{\footnotesize{}CRLC}} & \textbf{\footnotesize{}85.4$\pm$0.3} & \textbf{\footnotesize{}83.1$\pm$0.5} & \textbf{\footnotesize{}75.9$\pm$0.4} & \textbf{\footnotesize{}46.1$\pm$0.6} & \textbf{\footnotesize{}48.4$\pm$0.6} & \textbf{\footnotesize{}29.7$\pm$0.4}\tabularnewline
\hline 
\end{tabular}{\footnotesize\par}
\par\end{centering}
\caption{Clustering results of our proposed methods on ImageNet10 and ImageNet-Dogs.\label{tab:Complete-results-on-imagent10_n_dogs}}
\end{table*}

\begin{figure}
\centering{}\includegraphics[width=0.8\columnwidth]{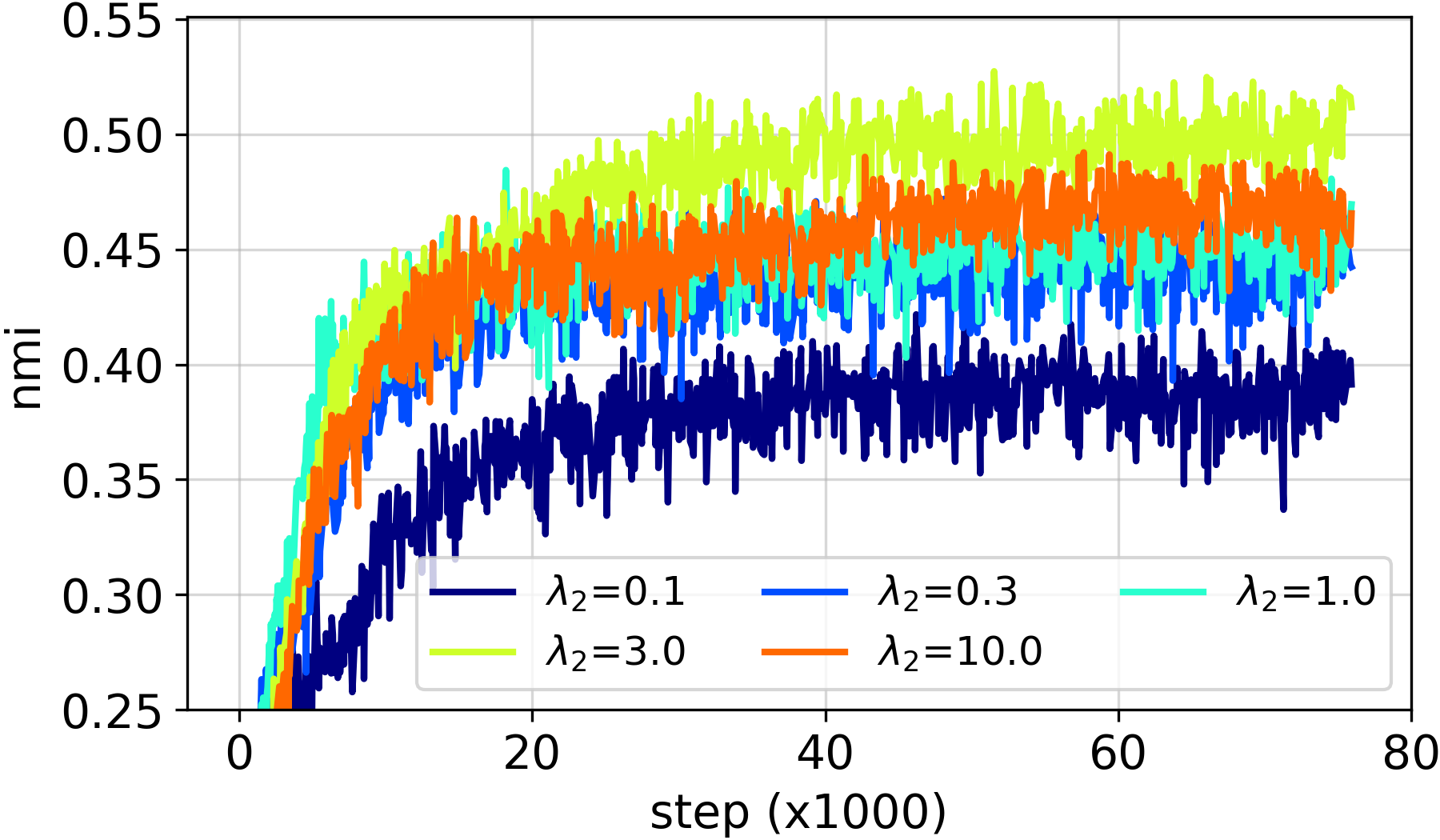}\caption{NMI curve of CRLC on ImageNet-Dogs w.r.t. different coefficients of
$\protect\Loss_{\text{FC}}$.\label{fig:LossFC-contrib-ImageNetDogs}}
\end{figure}

\subsection{Dataset description\label{subsec:Dataset-description}}

In Table~\ref{tab:Datasets_Details}, we provide details of the datasets
used in this work. CIFAR20 is CIFAR100 with 100 classes replaced by
20 super-classes. STL10 is different from other datasets in the sense
that it has an auxiliary set of 100,000 unlabeled samples of unknown
classes. Similar to previous works, we use samples from this auxiliary
set and the training set to train the ``representation learning''
head.

\subsection{Training setups for clustering\label{subsec:Training-setups-for-clustering}}

\paragraph{End-to-end clustering}

For end-to-end clustering, we use a SGD optimizer with a constant
learning rate = 0.1, momentum = 0.9, Nesterov = False, and weight
decay = 5e-4 based on the settings in \cite{goyal2017nonparametric,he2020momentum,tian2019contrastive}.
We set the batch size to 512 and the number of epochs to 2000. In
fact, on some datasets like ImageNet10 or ImageNet-Dogs, CRLC only
needs 500 epochs to converge. The coefficients of the negative entropy
and $\Loss_{\text{FC}}$ ($\lambda_{1}$ and $\lambda_{2}$ in Eq.~11
in the main text) are fixed at 1 and 10, respectively. Each experiment
is repeated 3 times with random initializations.

\paragraph{Two-stage clustering}

For two-stage clustering, we use the same settings as in \cite{van2020scan}.
Specifically, the backbone network is ResNet18 for CIFAR10/20, STL10
and is ResNet50 for ImageNet50/100/200. In the first (pretraining)
stage, for CIFAR10/20 and STL10, we pretrain the backbone network
and the RL-head via SimCLR \cite{chen2020simple} for 500 epochs.
The optimizer is SGD with an initial learning rate = 0.4 decayed with
a cosine decay schedule \cite{loshchilov2016sgdr}, momentum = 0.9,
Nesterov = False, and weight decay = 1e-4. Meanwhile, for ImageNet50/100/200,
we directly copy the pretrained weights of MoCo \cite{he2020momentum}
to the backbone network and the RL-head. After the pretraining stage,
we find for each sample in the training set 50 nearest neighbors based
on the cosine similarity measure. Positive samples for contrative
learning in the second stage are drawn uniformly from these sets of
nearest neighbors. In the second stage, for CIFAR10/20 and STL10,
we train both the backbone network and the C-head for 200 epochs by
minimizing $\Loss_{\text{cluster}}$ (Eq.~8 in the main text) using
an Adam optimizer with a constant learning rate = 1e-4 and weight
decay = 1e-4. For ImageNet50/100/200, we freeze the backbone network
and only train the C-head for 200 epochs by minimizing $\Loss_{\text{cluster}}$
using an SGD optimizer with a constant learning rate = 5.0, momentum
= 0.9, Nesterov = False, and weight decay = 0.0.

\begin{table*}
\begin{centering}
{\footnotesize{}}%
\begin{tabular}{|c|ccc|ccc|ccc|}
\hline 
\multicolumn{1}{|c|}{{\footnotesize{}Dataset}} & \multicolumn{3}{c|}{{\footnotesize{}CIFAR10}} & \multicolumn{3}{c|}{{\footnotesize{}CIFAR20}} & \multicolumn{3}{c|}{{\footnotesize{}STL10}}\tabularnewline
\hline 
\multicolumn{1}{|c|}{{\footnotesize{}Metric}} & {\footnotesize{}ACC} & {\footnotesize{}NMI} & {\footnotesize{}ARI} & {\footnotesize{}ACC} & {\footnotesize{}NMI} & {\footnotesize{}ARI} & {\footnotesize{}ACC} & {\footnotesize{}NMI} & {\footnotesize{}ARI}\tabularnewline
\hline 
\hline 
{\footnotesize{}K-means \cite{van2020scan}} & {\footnotesize{}65.9$\pm$5.7} & {\footnotesize{}59.8$\pm$2.0} & {\footnotesize{}50.9$\pm$3.7} & {\footnotesize{}39.5$\pm$1.9} & {\footnotesize{}40.2$\pm$1.1} & {\footnotesize{}23.9$\pm$1.1} & {\footnotesize{}65.8$\pm$5.1} & {\footnotesize{}60.4$\pm$2.5} & {\footnotesize{}50.6$\pm$4.1}\tabularnewline
\hline 
{\footnotesize{}SCAN \cite{van2020scan}} & {\footnotesize{}81.8$\pm$0.3} & {\footnotesize{}71.2$\pm$0.4} & {\footnotesize{}66.5$\pm$0.4} & {\footnotesize{}42.2$\pm$3.0} & {\footnotesize{}44.1$\pm$1.0} & {\footnotesize{}26.7$\pm$1.3} & {\footnotesize{}75.5$\pm$2.0} & {\footnotesize{}65.4$\pm$1.2} & {\footnotesize{}59.0$\pm$1.6}\tabularnewline
\hline 
\hline 
{\footnotesize{}two-stage CRLC} & \textbf{\footnotesize{}84.2$\pm$0.1} & \textbf{\footnotesize{}74.7$\pm$0.3} & \textbf{\footnotesize{}70.6$\pm$0.5} & \textbf{\footnotesize{}45.0$\pm$0.7} & \textbf{\footnotesize{}44.8$\pm$0.8} & \textbf{\footnotesize{}28.7$\pm$0.9} & \textbf{\footnotesize{}78.7$\pm$1.1} & \textbf{\footnotesize{}68.4$\pm$1.6} & \textbf{\footnotesize{}62.7$\pm$1.8}\tabularnewline
\hline 
\end{tabular}{\footnotesize\par}
\par\end{centering}
\caption{Two-stage clustering results on CIFAR10/20 and STL10.\label{tab:Two-stage-results-CIFAR}}
\end{table*}

\begin{figure*}
\begin{centering}
\subfloat[NMI\label{fig:mem_bank_num_neg_CIFAR20_a}]{\begin{centering}
\includegraphics[width=0.32\textwidth]{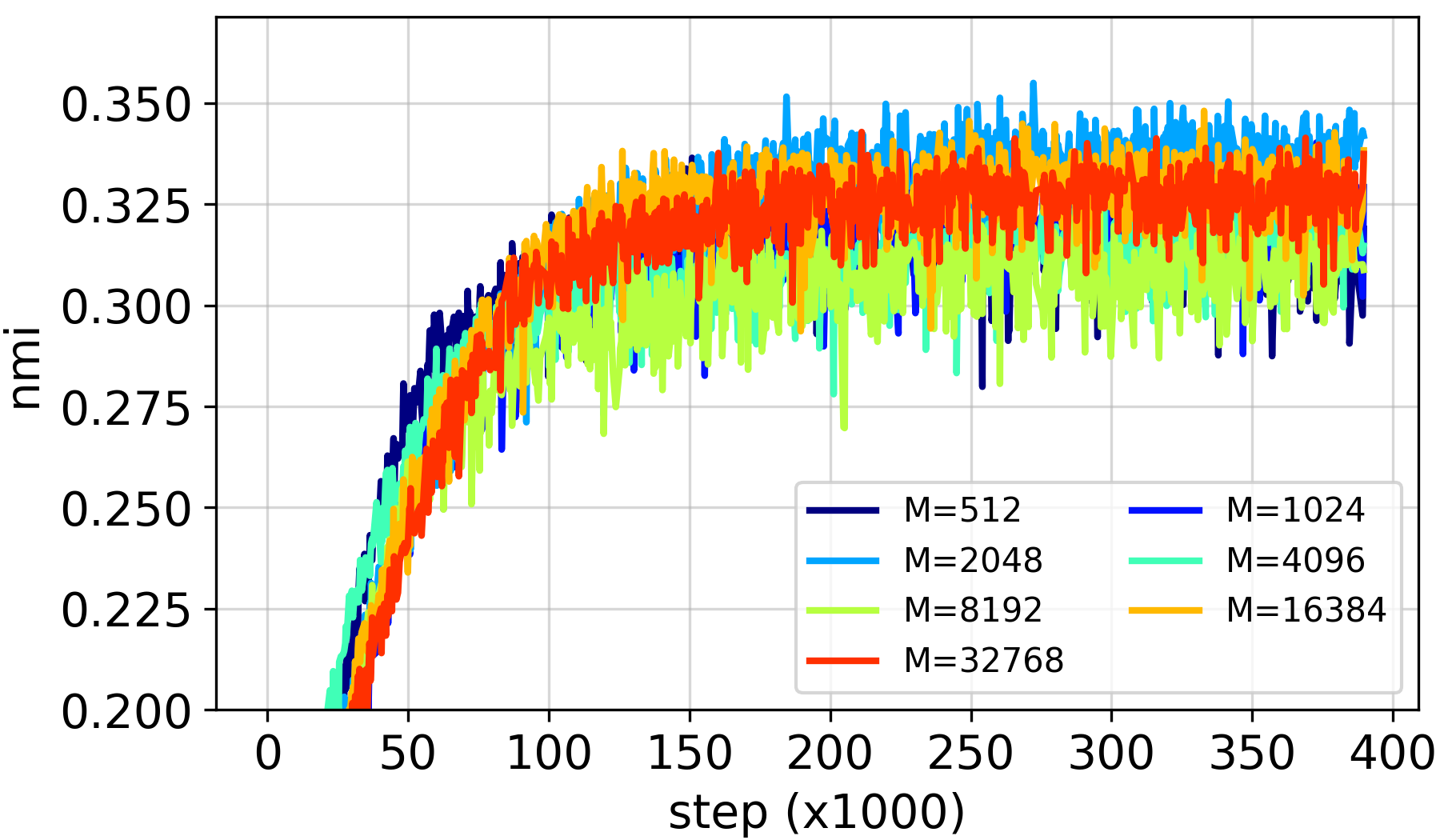}
\par\end{centering}
}\subfloat[InfoNCE w.r.t. $\protect\Loss_{\text{FC}}$\label{fig:mem_bank_num_neg_CIFAR20_b}]{\begin{centering}
\includegraphics[width=0.32\textwidth]{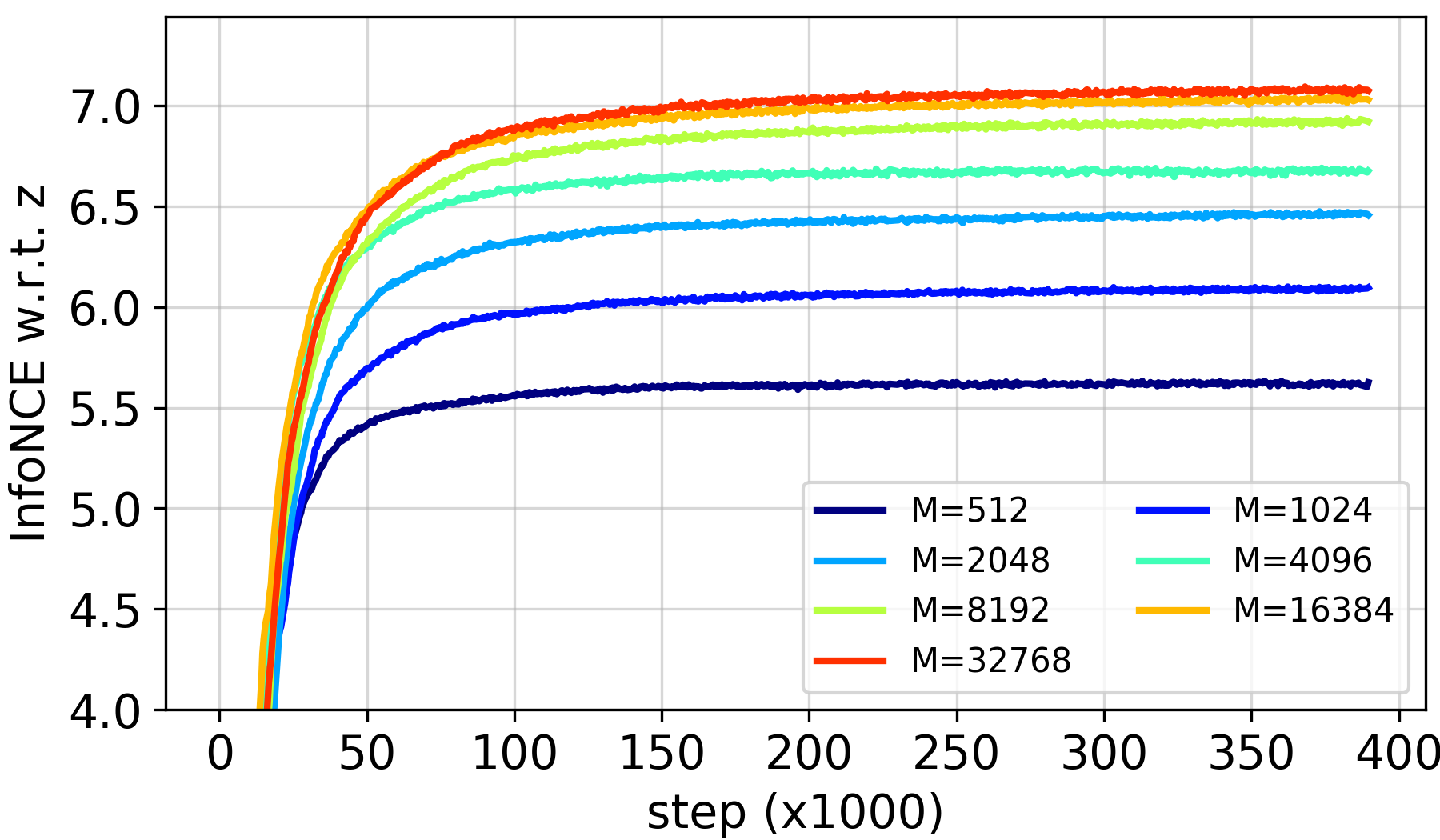}
\par\end{centering}
}\subfloat[InfoNCE w.r.t. $\protect\Loss_{\text{PC}}$\label{fig:mem_bank_num_neg_CIFAR20_c}]{\begin{centering}
\includegraphics[width=0.32\textwidth]{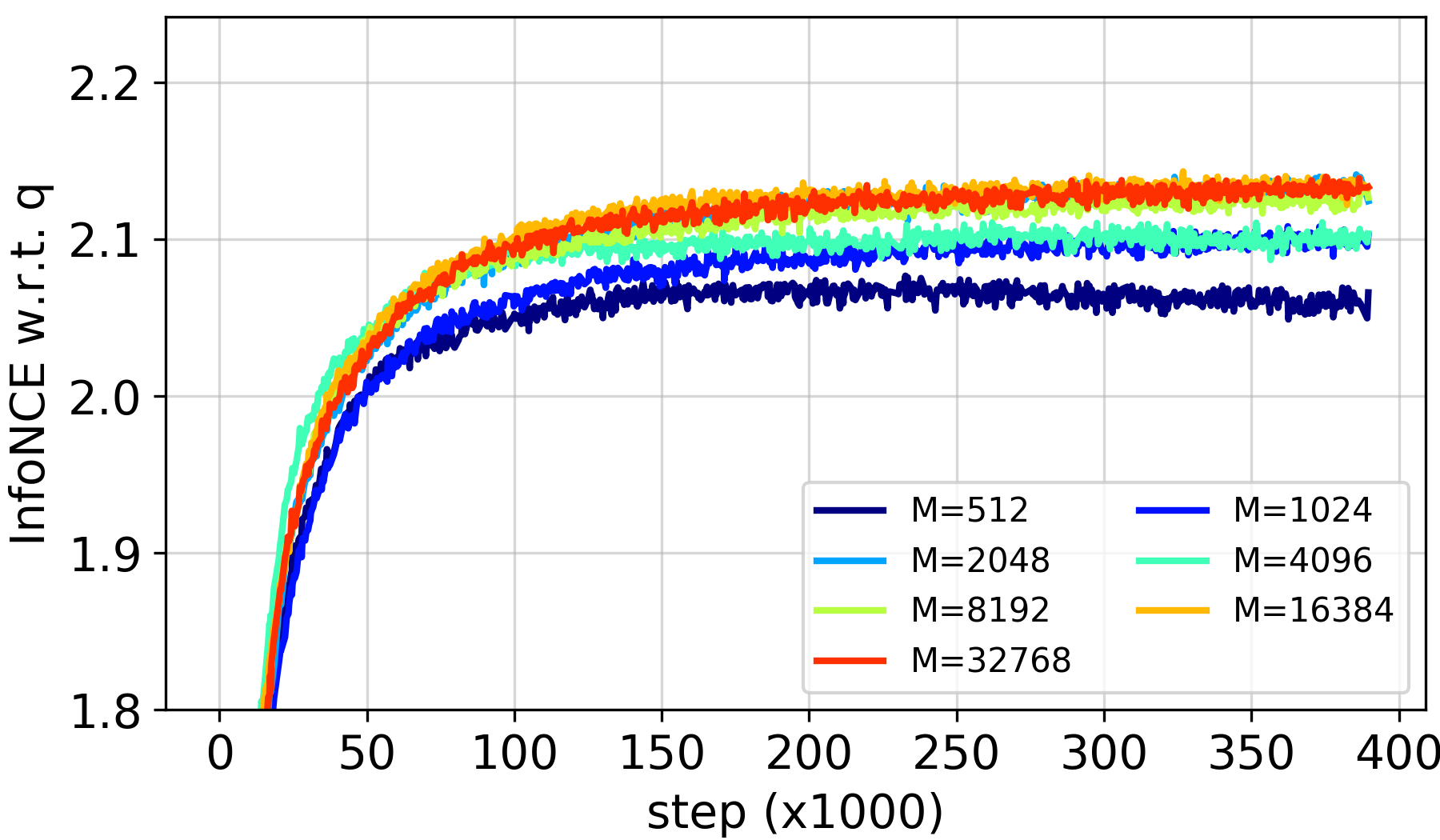}
\par\end{centering}
}
\par\end{centering}
\caption{Learning curves of MemoryBank-based CRLC on CIFAR20 w.r.t. different
numbers of negative samples. The momentum is $\alpha=0.5$. The InfoNCE
w.r.t. a contrastive loss is computed by using Eq.~2 in the main
text.\label{fig:mem_bank_num_neg_CIFAR20}}
\end{figure*}

\begin{figure*}
\begin{centering}
\subfloat[NMI\label{fig:mem_bank_momentum_CIFAR20_a}]{\begin{centering}
\includegraphics[width=0.32\textwidth]{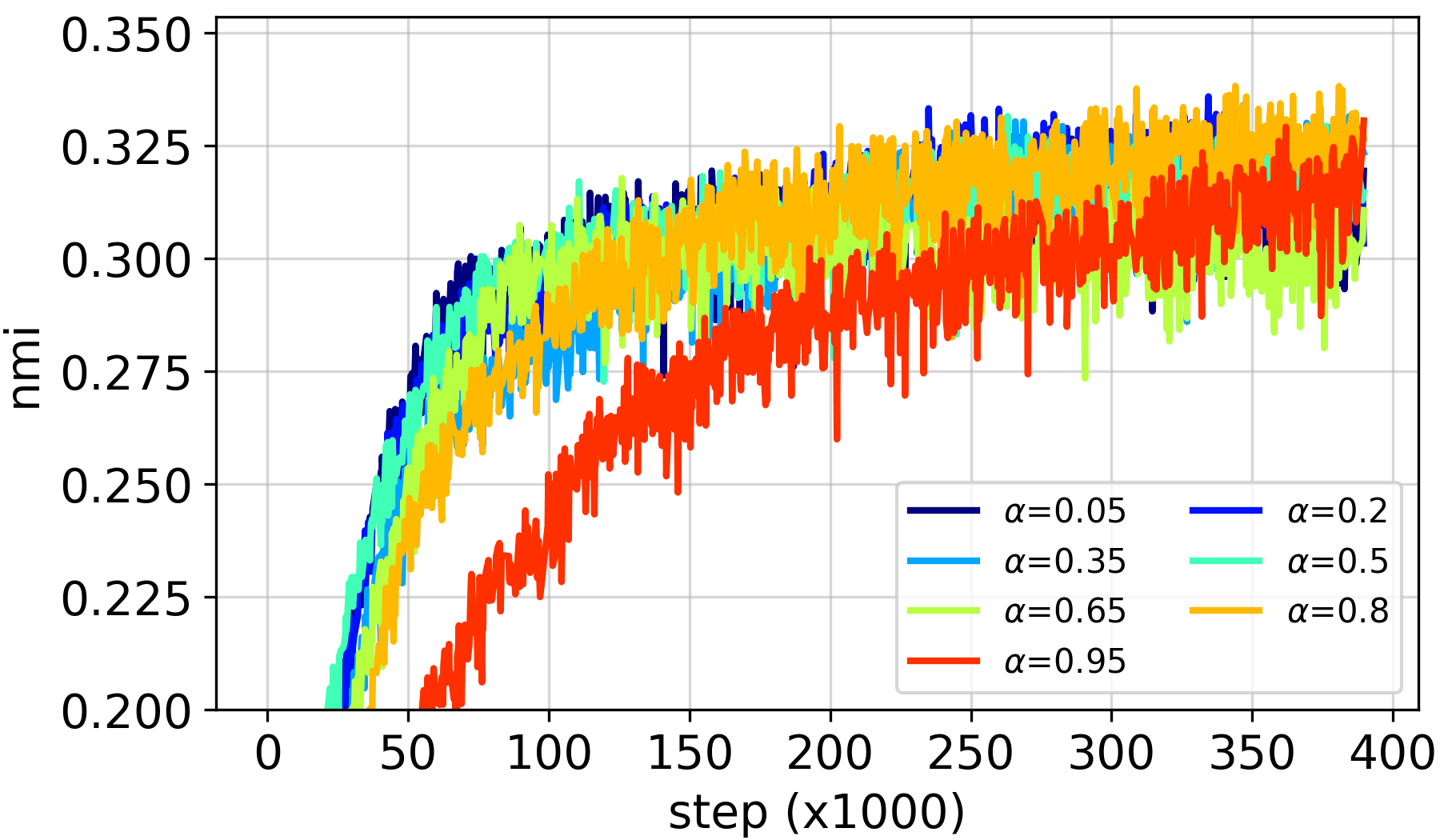}
\par\end{centering}
}\subfloat[InfoNCE w.r.t. $\protect\Loss_{\text{FC}}$\label{fig:mem_bank_momentum_CIFAR20_b}]{\begin{centering}
\includegraphics[width=0.32\textwidth]{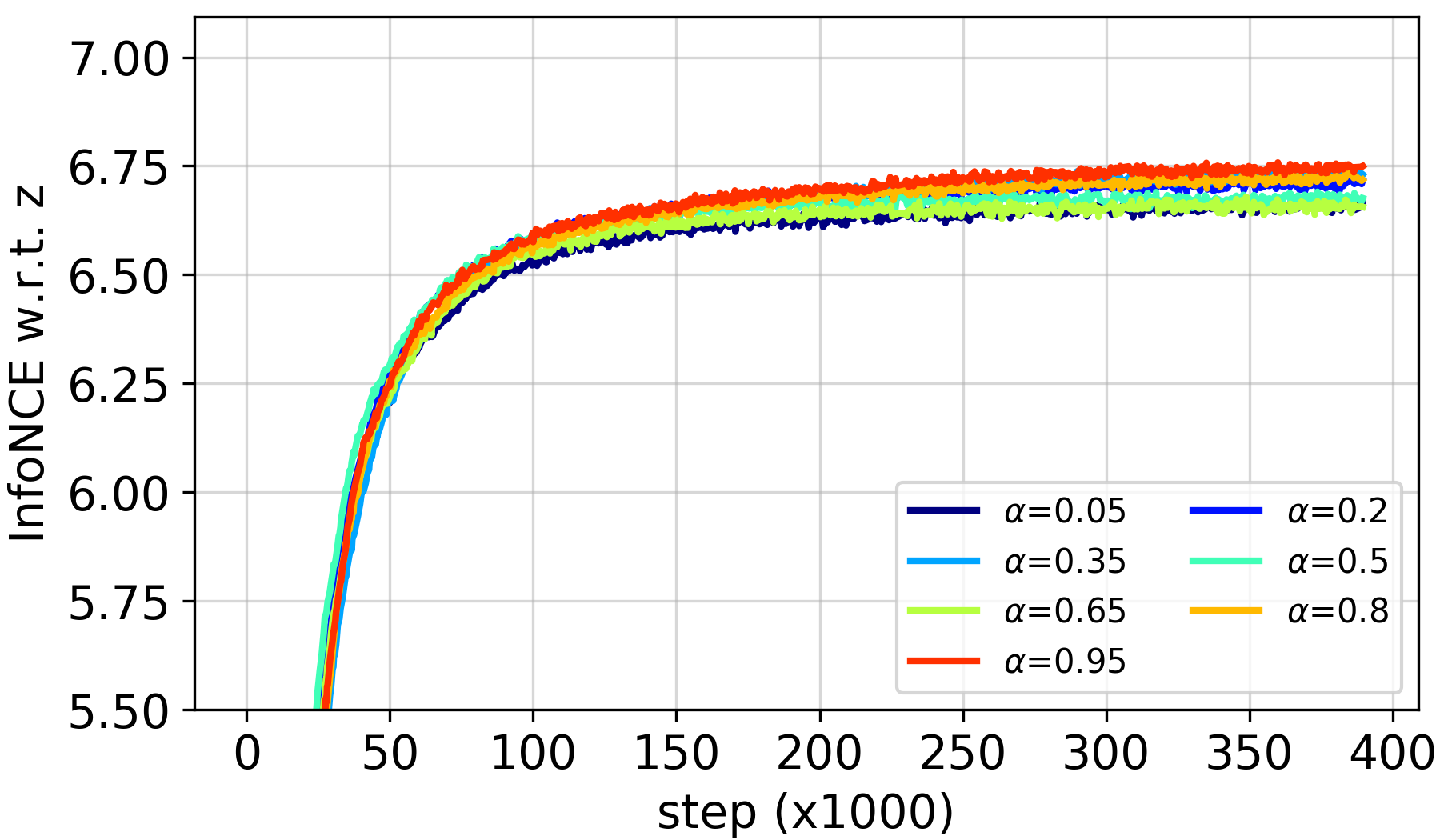}
\par\end{centering}
}\subfloat[InfoNCE w.r.t. $\protect\Loss_{\text{PC}}$\label{fig:mem_bank_momentum_CIFAR20_c}]{\begin{centering}
\includegraphics[width=0.32\textwidth]{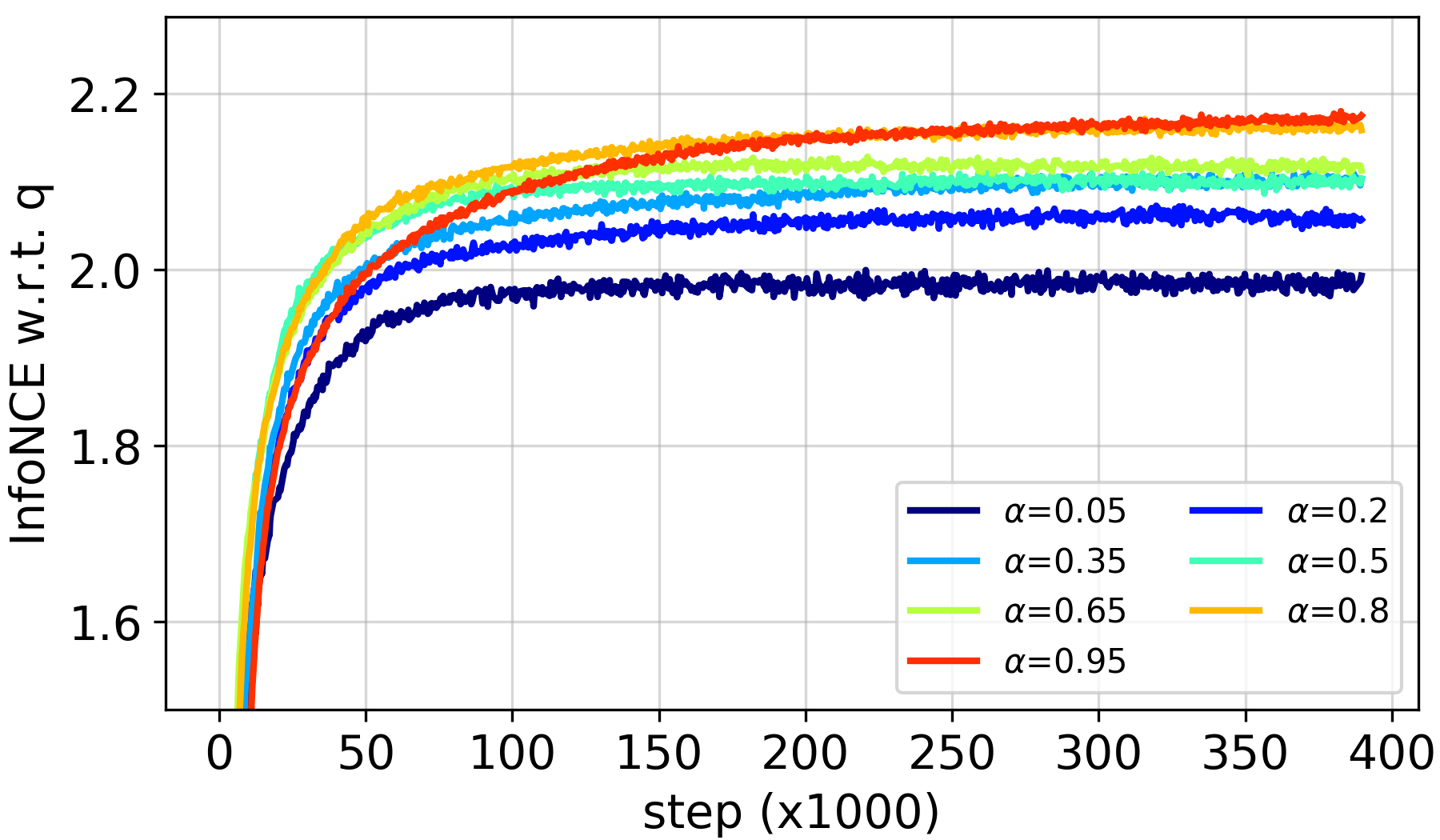}
\par\end{centering}
}
\par\end{centering}
\caption{Learning curves of MemoryBank-based CRLC on CIFAR20 w.r.t. different
values of the momentum. The number of negative samples is $M=4096$.
The InfoNCE w.r.t. a contrastive loss is computed by using Eq.~2
in the main text.\label{fig:mem_bank_momentum_CIFAR20}}
\end{figure*}

\subsection{Complete end-to-end clustering results\label{subsec:Complete-N2N-results}}

Complete results with standard deviations on the five standard clustering
datasets are shown in Tables~\ref{tab:Complete-results-on-CIFAR-n-STL10}
and \ref{tab:Complete-results-on-imagent10_n_dogs}. From Table~\ref{tab:Complete-results-on-CIFAR-n-STL10},
we see that for CIFAR10 using both the training and test sets does
not cause much difference in performance compared to using only the
training set. For CIFAR20, using only the training set even leads
to slightly better results. By contrast, for STL10, models trained
with both the training and test sets significantly outperform those
trained with the training set only. We believe the reason is that
for CIFAR10 and CIFAR20, the training set is big enough to cover the
data distribution in the test set while for STL10, it does not apply
(Table~\ref{tab:Datasets_Details}). Therefore, we think subsequent
works should use only the training set when doing experiments on CIFAR10
and CIFAR20.

\subsection{Additional two-stage clustering results\label{subsec:Additional-two-stage-clustering-results}}

Table~\ref{tab:Two-stage-results-CIFAR} compares the clustering
results of ``two-stage'' CRLC and SCAN on CIFAR10/20, STL10. ``Two-stage''
CRLC clearly outperforms SCAN on all datasets.

\subsection{Additional ablation study results}

\subsubsection{Contribution of the feature contrastive loss\label{subsec:Contribution-of-the-LossFC-Appdx}}

In Fig.~\ref{fig:LossFC-contrib-ImageNetDogs}, we show the performance
of CRLC on ImageNet-Dogs w.r.t. different coefficients of $\Loss_{\text{FC}}$
($\lambda_{2}$ in Eq.~11 in the main text). We observe that CRLC
achieves the best clustering accuracy when $\lambda_{2}=3$. However,
in Table~1 in the main text, we still report the result when $\lambda_{2}=10$.

\subsubsection{Nonparametric implementation of CRLC\label{subsec:Nonparametric-implementation-Appdx}}

In this section, we empirically investigate the contributions of the
number of negative samples and the momentum coefficient ($\alpha$
in Eq.~10 in the main text) to the performance of MemoryBank-based
CRLC.

\paragraph{Contribution of the number of negative samples}

From Fig.~\ref{fig:mem_bank_num_neg_CIFAR20_a}, we do not see any
correlation between the number of negative samples and the clustering
performance of MemoryBank-baed CRLC despite the fact that increasing
the number of negative samples allows the RL-head and the C-head to
gain more information from data (Figs.~\ref{fig:mem_bank_num_neg_CIFAR20_b}
and \ref{fig:mem_bank_num_neg_CIFAR20_c}). It suggests that for clustering
(and possibly other classification tasks), getting more information
may not lead to good results. Instead, we need to extract the right
information related to clusters.

\paragraph{Contribution of the momentum coefficient}

From Fig.~\ref{fig:mem_bank_momentum_CIFAR20_b}, we see that changing
the momentum value for updating probability vectors stored in the
memory bank does not affects amount of information captured by the
RL-head much. By contrast, in Fig.~\ref{fig:mem_bank_momentum_CIFAR20_c},
we see that larger values of the momentum cause the C-head to capture
more information. This is reasonable because the accumulated probability
vector $q_{n,t}$ is usually more stochastic (contains more information)
than the probability vector $\hat{q}_{n}$ of a particular view (Eq.~10
in the main text). Larger values of the momentum also cause the model
to converge slower but do not affect the performance much (Fig.~\ref{fig:mem_bank_momentum_CIFAR20_a}). 

\begin{figure*}
\begin{centering}
\includegraphics[width=0.98\textwidth]{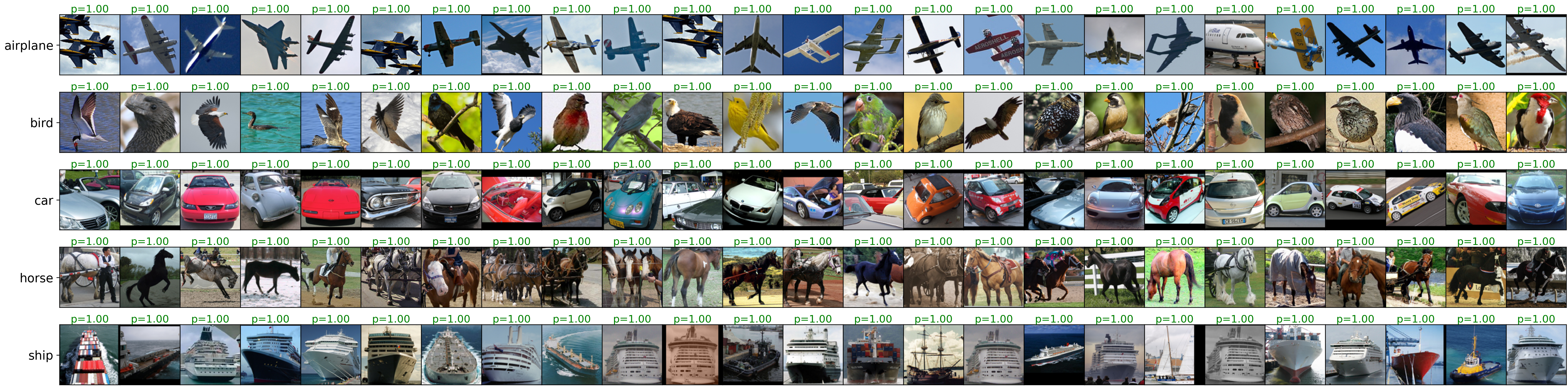}
\par\end{centering}
\caption{STL10 samples of 5 classes correctly predicted by CRLC. Samples are
sorted by their confidence scores.\label{fig:STL10-25-correct-samples}}
\end{figure*}

\subsection{Qualitative evaluation\label{subsec:Qualitative-evaluation}}

In Fig.~\ref{fig:STL10-25-correct-samples}, we show the top correctly
predicted samples according to their confidence score for each of
5 classes from the training set of STL10. It is clear that these samples
are representative of the cluster they belong to.

\subsection{Consistency-regularization-based semi-supervised learning methods\label{subsec:Consistency-regularization-based-methods}}

When some labeled data are given, the clustering problem naturally
becomes semi-supervised learning (SSL). The core idea behind recent
state-of-the-art SSL methods such as UDA \cite{xie2019unsupervised},
MixMatch \cite{berthelot2019mixmatch}, ReMixMatch \cite{berthelot2019remixmatch},
FixMatch \cite{sohn2020fixmatch} is \emph{consistency regularization}
(CR) which is about forcing an input sample under different perturbations/augmentations
to have similar class predictions. In this sense, CR can be seen as
an unnormalized version of the probability contrastive loss without
the denominator. Different SSL methods extend CR in different ways.
For example, UDA uses strong data augmentation to generate positive
pairs. MixMatch and ReMixMatch combines CR with MixUp \cite{zhang2017mixup}.
However, none of the above methods achieve consistent performance
with extremely few labeled data (Section~5.2 in the main text). By
contrast, clustering methods like CRLC perform consistently well even
when no label is available. Thus, we believe designing a method that
enjoys the strength of both fields is possible and CRLC-semi can be
one step towards that goal. 

\subsection{Training setups for semi-supervised learning\label{subsec:Training-setups-for-SSL}}

To train CRLC-semi, we use a SGD optimizer with an initial learning
rate = 0.1, momentum = 0.9, Nesterov = False, and weight decay = 5e-4.
Similar to \cite{sohn2020fixmatch}, we adjust the learning rate at
each epoch using a cosine decay schedule \cite{loshchilov2016sgdr}
computed as follows:
\[
\text{lr}_{t}=\text{\text{lr}}_{\text{min}}+(\text{lr}_{\text{init}}-\text{lr}_{\text{min}})\times\frac{1+\cos\left(\frac{t}{T}\pi\right)}{2}
\]
where $\text{lr}_{\text{init}}=0.1$, $\text{lr}_{\text{min}}=0.001$,
$\text{lr}_{t}$ is the learning rate at epoch $t$ over $T$ epochs
in total. $T$ is 2000 and 1000 for CIFAR10 and CIFAR100, respectively.
The number of labeled and unlabeled samples in each batch is 64 and
512, respectively. In $\Loss_{\text{CRLC-semi}}$ (Eq.~12 in the
main text), $\lambda_{1}=1$, $\lambda_{2}=5$, and $\lambda_{3}=1$.

We reimplement FixMatch using sample code from Github\footnote{\url{https://github.com/CoinCheung/fixmatch-pytorch}}
with the default settings unchanged. In this code, the number of labeled
and unlabeled data in a batch is 64 and 448, respectively. However,
the number of steps in one epoch does not depend on the batch size
but is fixed at 1024. Thus, FixMatch is trained in 1024 epochs $\approx$
1 million steps for both CIFAR10 and CIFAR100. Meanwhile, CLRC-semi
is trained in only 194,000 steps for CIFAR10 and 97,000 steps for
CIFAR100.

\begin{table*}
\begin{centering}
{\footnotesize{}}%
\begin{tabular}{|c|c|c|c|c|c|c|c|c|}
\hline 
{\footnotesize{}Dataset} & \multicolumn{4}{c|}{{\footnotesize{}CIFAR10}} & \multicolumn{4}{c|}{{\footnotesize{}CIFAR100}}\tabularnewline
\hline 
{\footnotesize{}Labels} & {\footnotesize{}10} & {\footnotesize{}20} & {\footnotesize{}40} & {\footnotesize{}250} & {\footnotesize{}100} & {\footnotesize{}200} & {\footnotesize{}400} & {\footnotesize{}2500}\tabularnewline
\hline 
\hline 
{\footnotesize{}$\Pi$-model \cite{laine2016temporal}} & {\footnotesize{}-} & {\footnotesize{}-} & {\footnotesize{}-} & {\footnotesize{}54.26$\pm$3.97} & {\footnotesize{}-} & {\footnotesize{}-} & {\footnotesize{}-} & {\footnotesize{}57.25$\pm$0.48}\tabularnewline
\hline 
{\footnotesize{}Pseudo Labeling \cite{lee2013pseudo}} & {\footnotesize{}-} & {\footnotesize{}-} & {\footnotesize{}-} & {\footnotesize{}49.78$\pm$0.43} & {\footnotesize{}-} & {\footnotesize{}-} & {\footnotesize{}-} & {\footnotesize{}57.38$\pm$0.46}\tabularnewline
\hline 
{\footnotesize{}Mean Teacher \cite{tarvainen2017mean}} & {\footnotesize{}-} & {\footnotesize{}-} & {\footnotesize{}-} & {\footnotesize{}32.32$\pm$2.30} & {\footnotesize{}-} & {\footnotesize{}-} & {\footnotesize{}-} & {\footnotesize{}53.91$\pm$0.57}\tabularnewline
\hline 
{\footnotesize{}MixMatch \cite{berthelot2019mixmatch}} & {\footnotesize{}-} & {\footnotesize{}-} & {\footnotesize{}47.54$\pm$11.50} & {\footnotesize{}11.05$\pm$0.86} & {\footnotesize{}-} & {\footnotesize{}-} & {\footnotesize{}67.61$\pm$1.32} & {\footnotesize{}39.94$\pm$0.37}\tabularnewline
\hline 
{\footnotesize{}UDA \cite{xie2019unsupervised}} & {\footnotesize{}-} & {\footnotesize{}-} & {\footnotesize{}29.05$\pm$5.93} & {\footnotesize{}8.82$\pm$1.08} & {\footnotesize{}-} & {\footnotesize{}-} & {\footnotesize{}59.28$\pm$0.88} & {\footnotesize{}33.13$\pm$0.22}\tabularnewline
\hline 
{\footnotesize{}ReMixMatch \cite{berthelot2019remixmatch}} & {\footnotesize{}-} & {\footnotesize{}-} & {\footnotesize{}19.10$\pm$9.64} & {\footnotesize{}5.44$\pm$0.05} & {\footnotesize{}-} & {\footnotesize{}-} & {\footnotesize{}44.28$\pm$2.06} & {\footnotesize{}27.43$\pm$0.31}\tabularnewline
\hline 
{\footnotesize{}FixMatch (RA) \cite{sohn2020fixmatch}} & {\footnotesize{}-} & {\footnotesize{}-} & {\footnotesize{}13.81$\pm$3.37} & {\footnotesize{}5.07$\pm$0.65} & {\footnotesize{}-} & {\footnotesize{}-} & \textbf{\footnotesize{}48.85$\pm$1.75} & {\footnotesize{}28.29$\pm$0.11}\tabularnewline
\hline 
\hline 
{\footnotesize{}ReMixMatch$^{\dagger}$}\tablefootnote{{\footnotesize{}https://github.com/google-research/remixmatch}} & {\footnotesize{}59.86$\pm$9.34} & {\footnotesize{}41.68$\pm$8.15} & {\footnotesize{}28.31$\pm$6.72} & {\footnotesize{}-} & \textbf{\footnotesize{}76.32$\pm$4.30} & \textbf{\footnotesize{}66.51$\pm$2.86} & {\footnotesize{}52.23$\pm$1.71} & {\footnotesize{}-}\tabularnewline
\hline 
{\footnotesize{}FixMatch (RA)$^{\dagger}$}\tablefootnote{{\footnotesize{}https://github.com/CoinCheung/fixmatch-pytorch}} & \textbf{\footnotesize{}25.49$\pm$7.74} & \textbf{\footnotesize{}21.15$\pm$8.96} & \textbf{\footnotesize{}8.87$\pm$4.29} & {\footnotesize{}-} & {\footnotesize{}79.27$\pm$2.65} & {\footnotesize{}68.58$\pm$0.7} & {\footnotesize{}57.52$\pm$1.5} & {\footnotesize{}-}\tabularnewline
\hline 
\hline 
{\footnotesize{}CRLC-semi} & {\footnotesize{}46.75$\pm$8.01} & {\footnotesize{}29.81$\pm$1.18} & {\footnotesize{}19.87$\pm$0.82} & {\footnotesize{}13.53$\pm$0.21} & {\footnotesize{}82.20$\pm$1.15} & {\footnotesize{}73.04$\pm$1.15} & {\footnotesize{}60.87$\pm$0.17} & {\footnotesize{}41.10$\pm$0.12}\tabularnewline
\hline 
\end{tabular}{\footnotesize\par}
\par\end{centering}
\caption{Full classification errors on CIFAR10 and CIFAR100. Lower values are
better. Results of baselines are taken from \cite{sohn2020fixmatch}.
$^{\dagger}$: Results obtained from external implementations of models.\label{tab:Semi-supervised-learning-results-FULL}}
\end{table*}

\begin{figure*}
\begin{centering}
\subfloat[CIFAR10]{\begin{centering}
\includegraphics[width=0.25\textwidth]{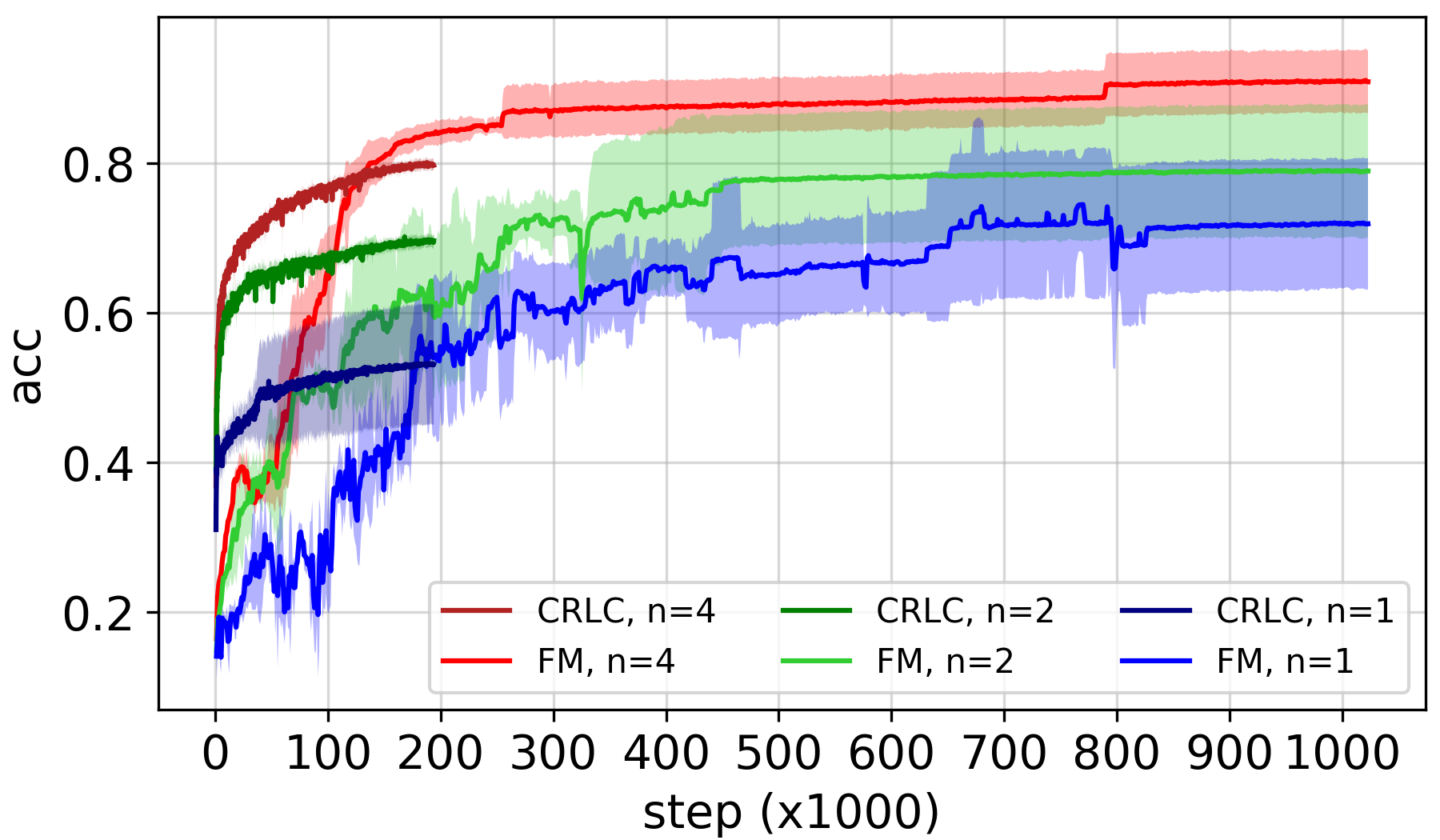}\includegraphics[width=0.25\textwidth]{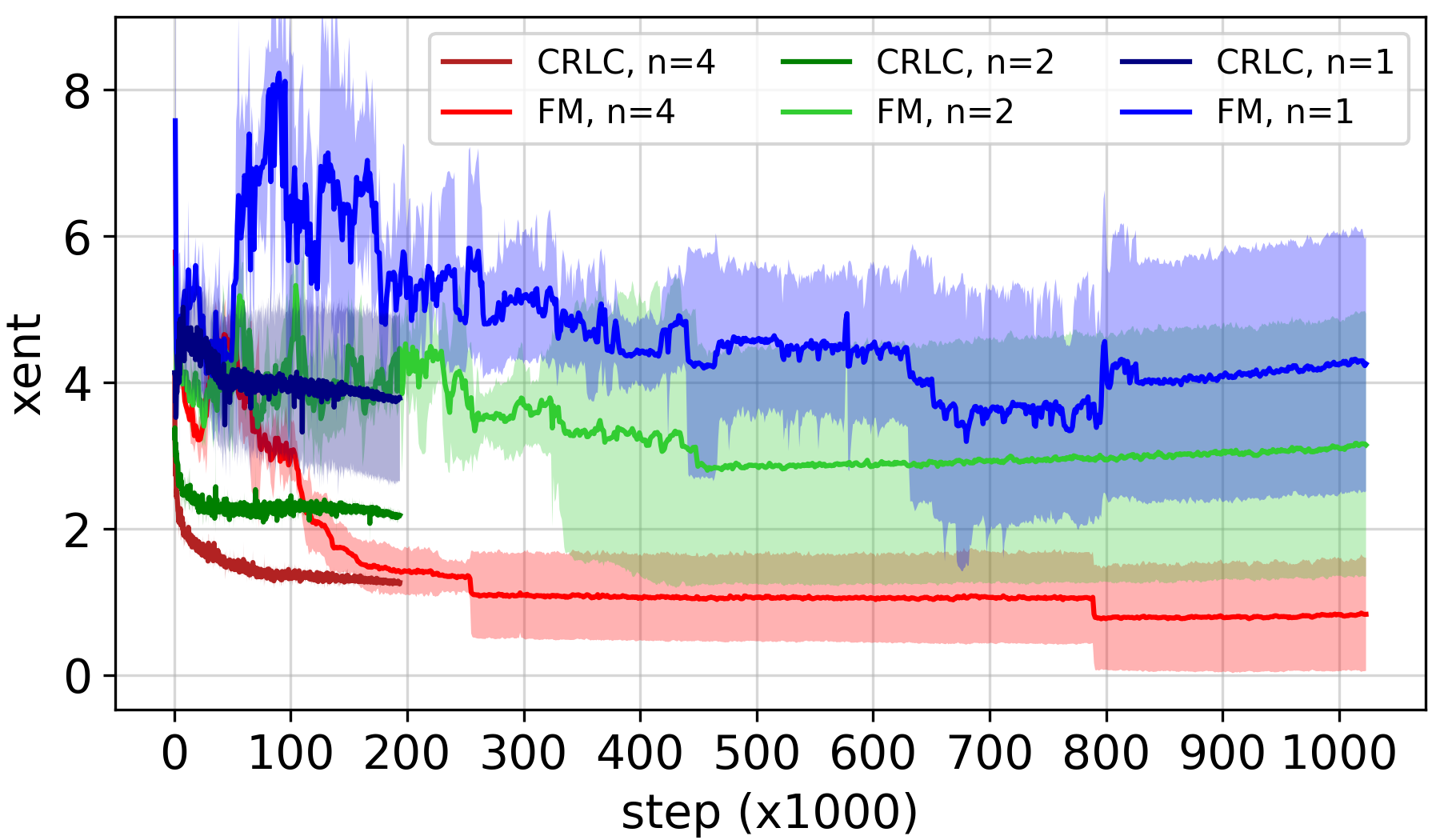}
\par\end{centering}
}\subfloat[CIFAR100]{\begin{centering}
\includegraphics[width=0.25\textwidth]{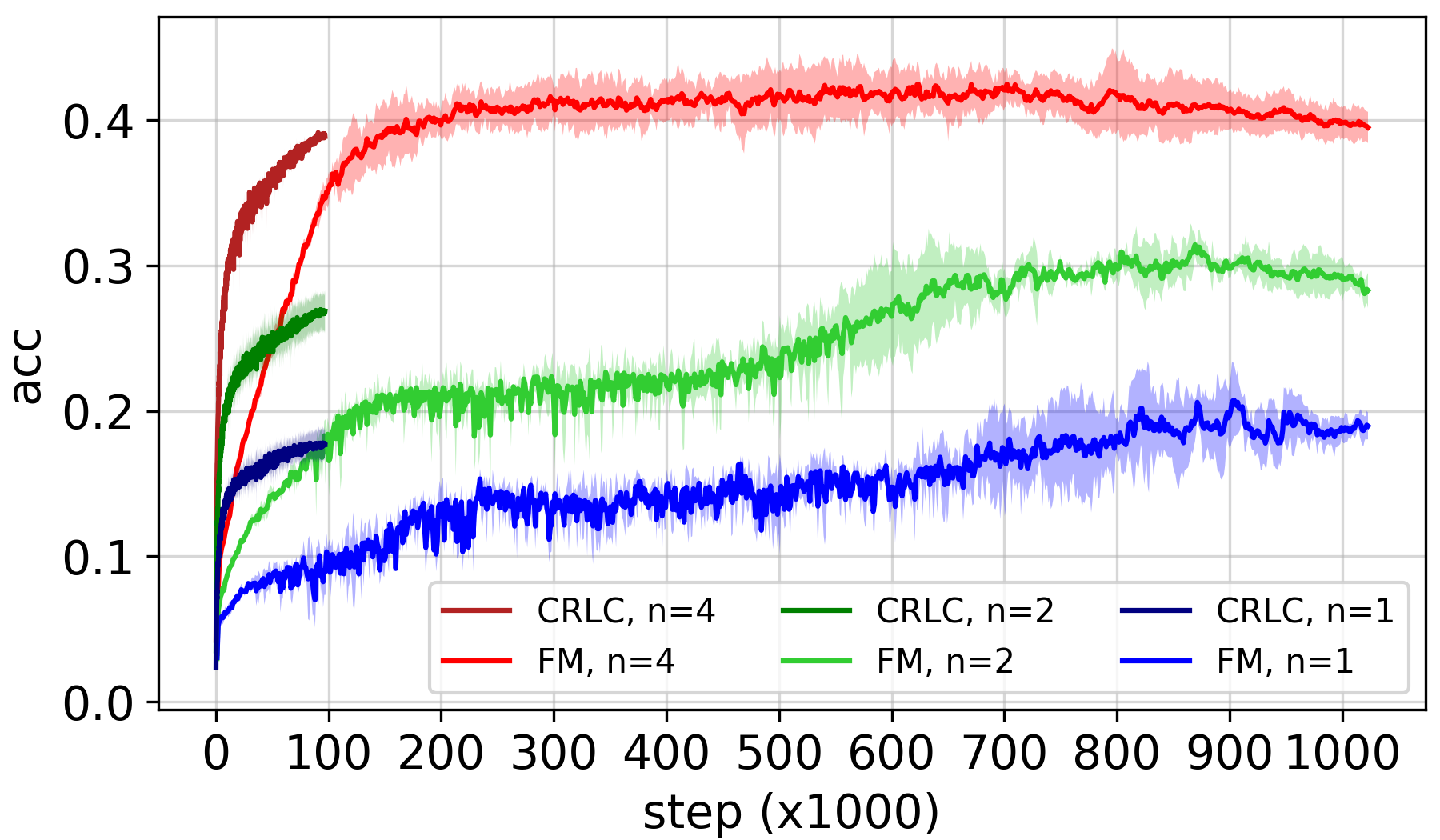}\includegraphics[width=0.25\textwidth]{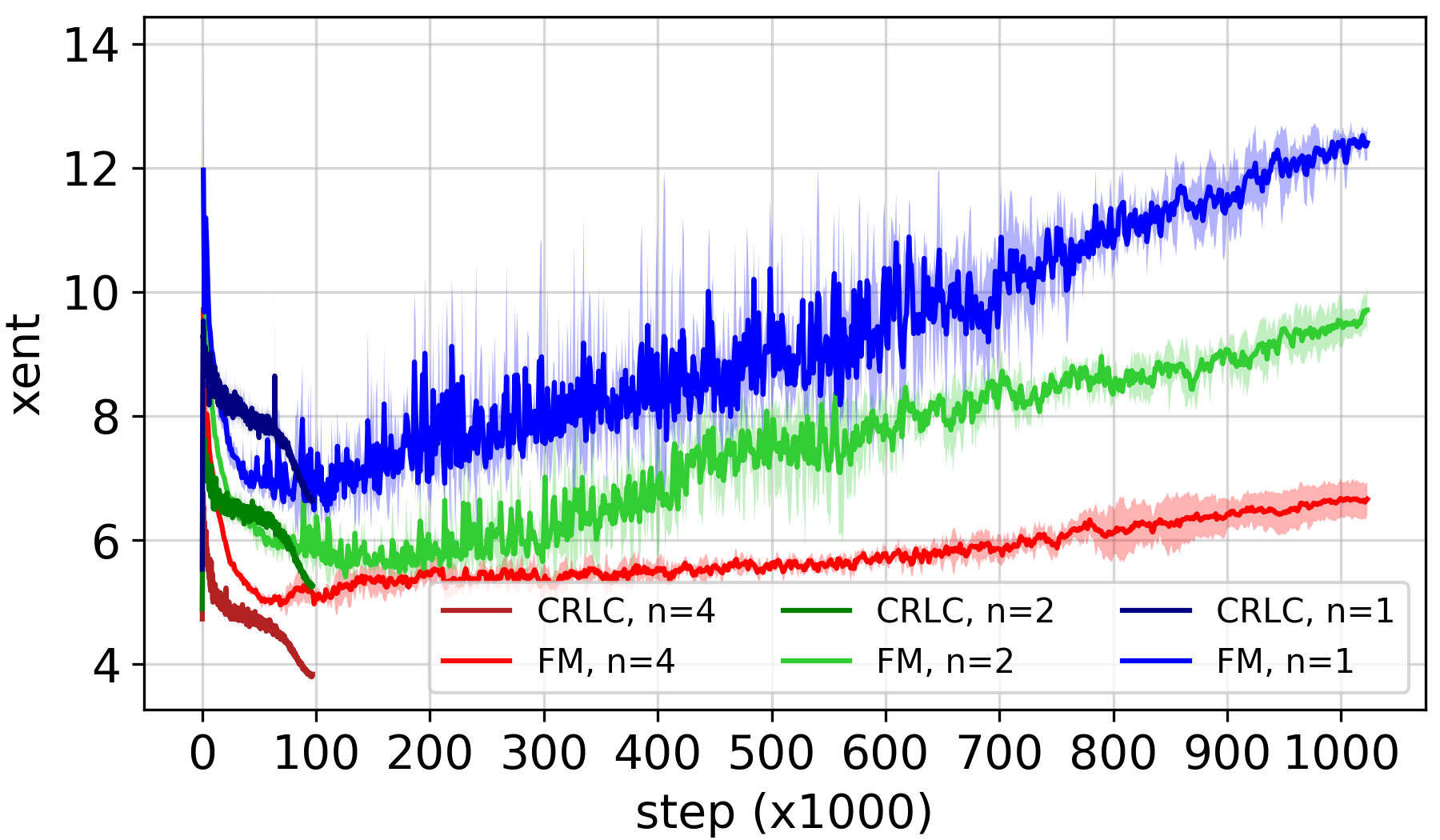}
\par\end{centering}
}
\par\end{centering}
\caption{Test accuracy and crossentropy curves of CRLC-semi (CRLC) and FixMatch
(FM) on CIFAR10 and CIFAR100 with 1, 2, 4 labeled samples per class.
It is clear that CRLC-semi performs consistently in all cases except
for the case of CIFAR10 with 1 labeled sample per class. However,
even in that case, the CRLC-semi still gives consistent performance
for each run (Fig.~\ref{fig:CIFAR10_1L_3run_curves}). FixMatch,
by contrast, is very inconsistent in its performance for each run,
especially on CIFAR10.\label{fig:Learning-curves-CRLC-FixMatch}}
\end{figure*}

\subsection{More results on semi-supervised learning\label{subsec:More-results-on-SSL}}

\begin{figure}
\begin{centering}
\includegraphics[width=0.8\columnwidth]{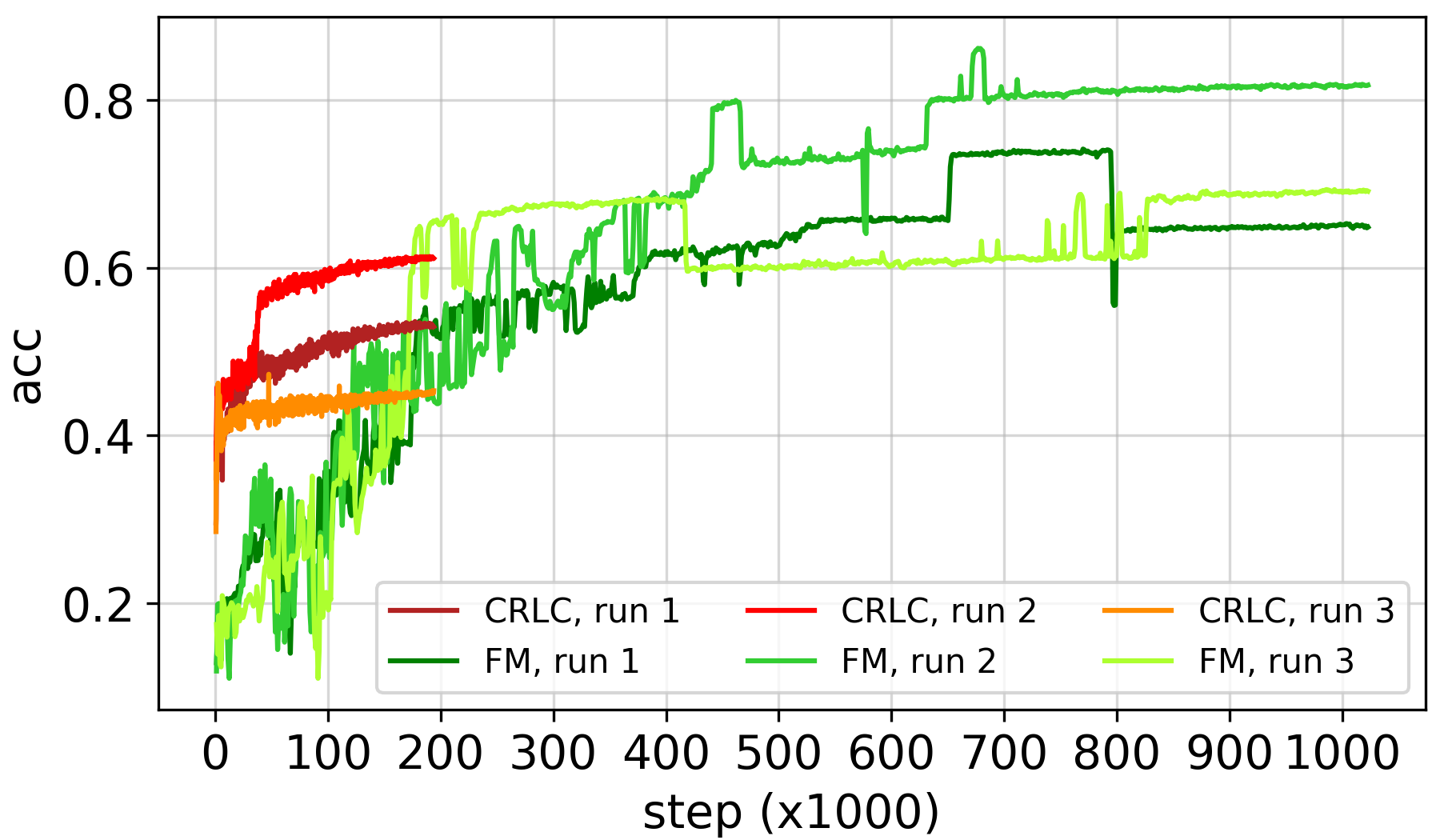}
\par\end{centering}
\caption{Test accuracy curves of CRLC-semi (CRLC) and FixMatch (FM) on CIFAR10
with 1 labeled samples per class w.r.t. 3 different runs.\label{fig:CIFAR10_1L_3run_curves}}
\end{figure}

\begin{figure}
\begin{centering}
\includegraphics[width=1\columnwidth]{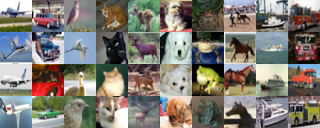}
\par\end{centering}
\caption{40 labeled CIFAR10 samples organized into 4 rows where each row has
10 images corresponding to 10 classes. For 10 and 20 labeled samples,
the first row and the first two rows are considered, respectively.\label{fig:40-labeled-CIFAR10}}
\end{figure}

\begin{figure}
\begin{centering}
\includegraphics[width=1\columnwidth]{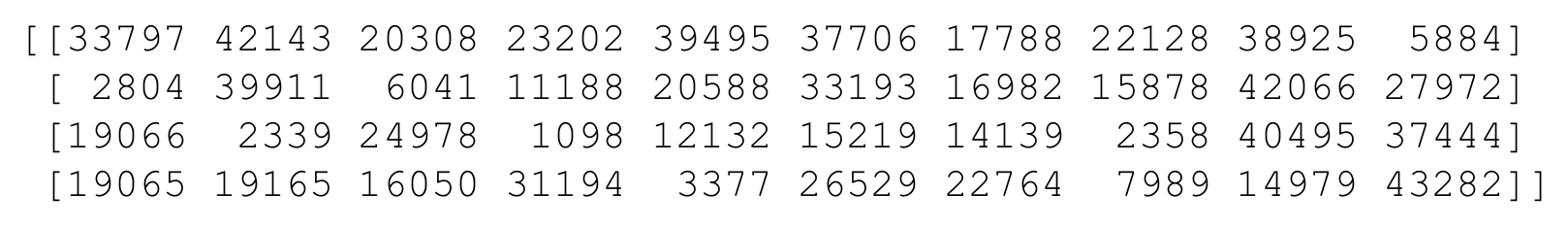}
\par\end{centering}
\caption{Indices in the training set of the images in Fig.~\ref{fig:40-labeled-CIFAR10}\label{fig:Indices-40-labeled-CIFAR10}}
\end{figure}

In Table~\ref{tab:Semi-supervised-learning-results-FULL}, we show
additional semi-supervised learning results of CRLC-semi on CIFAR10
and CIFAR100 in comparison with more baselines. CRLC-semi clearly
outperforms all standard baselines like $\Pi$-model, Pseudo Labeling
or Mean Teacher. However, CRLC-semi looses its advantage over holistic
methods like MixMatch \cite{berthelot2019mixmatch} and methods that
use strong data augmentation like UDA \cite{xie2019unsupervised}
or ReMixMatch \cite{berthelot2019remixmatch} when the number of labeled
data is big enough. Currently, we are not sure whether the problem
comes from the feature contrastive loss $\Loss_{\text{FC}}$ (when
we have enough labels, representation learning may act as a regularization
term and reduce the classification result), or from the negative entropy
term in $\Loss_{\text{cluster}}$ (causing too much regularization),
or even from the probability contrastive loss (contrasting probabilities
of two related views is not suitable when we have enough labels).
Thus, we leave the answer of this question for future work. To gain
more insight about the advantages of our proposed CRLC-semi, we provide
detailed comparison between this method and the best SSL baseline
- FixMatch \cite{sohn2020fixmatch} in the next section.

\paragraph{Direct comparison between CRLC-semi and FixMatch\label{par:Direct-comparison-between-CRLC-n-FixMatch}}

FixMatch \cite{sohn2020fixmatch} is a powerful SSL method that makes
use of pseudo-labeling \cite{lee2013pseudo} and strong data augmentation
\cite{cubuk2019randaugment} to generate quality pseudo-labels for
training. FixMatch has been shown to work reasonably well with only
1 labeled sample per class. In our experiment, we observe that FixMatch
outperforms CRLC-semi on both CIFAR10 and CIFAR100. However, FixMatch
must be trained in much more steps than CRLC-semi to achieve good
results and its performance is very inconsistent (like other SSL baselines)
compared to that of CRLC-semi (Figs.~\ref{fig:Learning-curves-CRLC-FixMatch},
\ref{fig:CIFAR10_1L_3run_curves}).

\paragraph{Details of the labeled samples}

For the purpose of comparison and reproducing the results in Table~\ref{tab:Semi-supervised-learning-results-FULL},
we provide the indices of 40 labeled CIFAR10 samples and 400 labeled
CIFAR100 samples used in our experiments in Fig.~\ref{fig:Indices-40-labeled-CIFAR10}
and Fig.~\ref{fig:Indices-400-labeled-CIFAR100}, respectively. We
also visualize these samples in Fig.~\ref{fig:40-labeled-CIFAR10}
and \ref{fig:400-labeled-CIFAR100}. We note that we do not cherry-pick
these samples but randomly draw them from the training set.

\begin{figure*}
\begin{centering}
\includegraphics[width=0.235\textwidth]{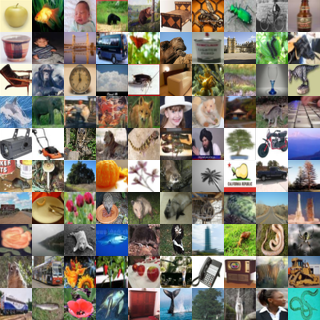}\hspace{0.015\textwidth}\includegraphics[width=0.235\textwidth]{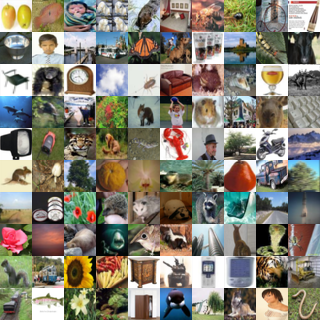}\hspace{0.015\textwidth}\includegraphics[width=0.235\textwidth]{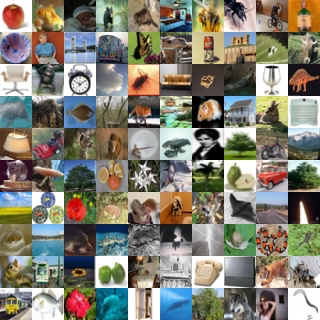}\hspace{0.015\textwidth}\includegraphics[width=0.235\textwidth]{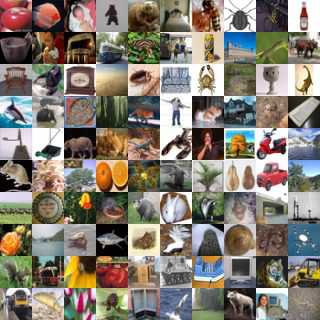}
\par\end{centering}
\caption{400 labeled CIFAR100 samples organized into 4 image blocks where each
image block is a set of 100 images corresponding to 100 classes. For
100 and 200 labeled samples, the first block and the first two blocks
are considered, respectively.\label{fig:400-labeled-CIFAR100}}
\end{figure*}

\begin{figure*}
\begin{centering}
\includegraphics[width=1\textwidth]{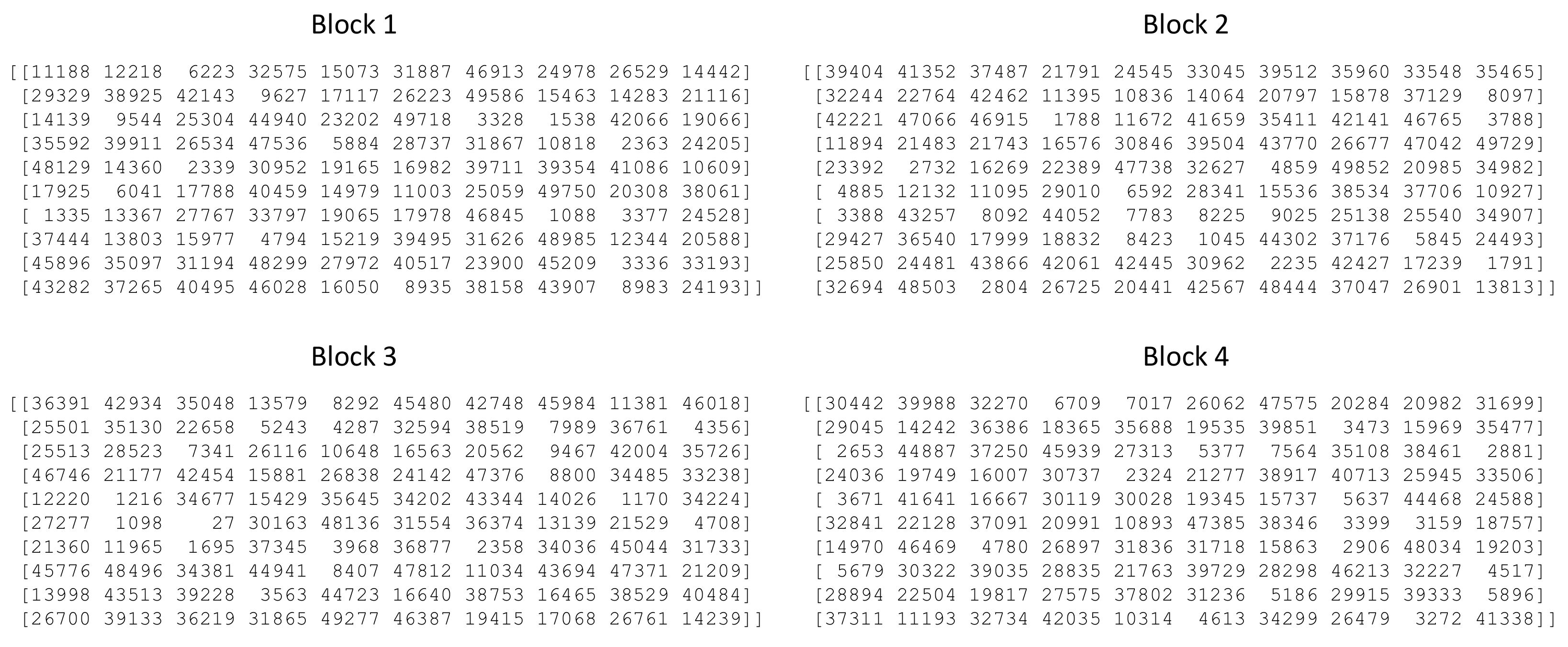}
\par\end{centering}
\caption{Indices in the training set of the images in Fig.~\ref{fig:400-labeled-CIFAR100}\label{fig:Indices-400-labeled-CIFAR100}}
\end{figure*}

\end{document}